\newtheorem{theorem}{Theorem}
\newtheorem{definition}{Definition}
\begin{document}

\title{Toward Robust and Harmonious Adaptation for Cross-modal Retrieval}

\author{Haobin Li, Mouxing Yang, Xi Peng% <-this % stops a space
\IEEEcompsocitemizethanks{
\IEEEcompsocthanksitem 
H. Li, M. Yang, and X. Peng are with the College of Computer Science, Sichuan University, China. X. Peng is also with the
National Key Laboratory of Fundamental Algorithms and Models for Engineering Numerical Simulation, Sichuan University, China. E-mail: \{haobinli.gm, yangmouxing, pengx.gm\}@gmail.com.
}
\thanks{Corresponding Authors: M. Yang and X. Peng.}
}

\maketitle

\begin{abstract} 
Recently, the general-to-customized paradigm has emerged as the dominant approach for Cross-Modal Retrieval (CMR), which reconciles the distribution shift problem between the source domain and the target domain.
However, existing general-to-customized CMR methods typically assume that the entire target-domain data is available, which is easily violated in real-world scenarios and thus inevitably suffer from the query shift (QS) problem.
Specifically, query shift embraces the following two characteristics and thus poses new challenges to CMR.
i) \textit{Online Shift}: real-world queries always arrive in an online manner, rendering it impractical to access the entire query set beforehand for customization approaches;
ii) \textit{Diverse Shift}: even with domain customization, the CMR models struggle to satisfy queries from diverse users or scenarios, leaving an urgent need to accommodate diverse queries.
In this paper, we observe that QS would not only undermine the well-structured common space inherited from the source model, but also steer the model toward forgetting the indispensable general knowledge for CMR.
Inspired by the observations, we propose a novel method for achieving online and harmonious adaptation against QS, dubbed Robust adaptation with quEry ShifT (REST).
To deal with online shift, REST first refines the retrieval results to formulate the query predictions and accordingly designs a QS-robust objective function on these predictions to preserve the well-established common space in an online manner.
As for tackling the more challenging diverse shift, REST employs a gradient decoupling module to dexterously manipulate the gradients during the adaptation process, thus preventing the CMR model from forgetting the general knowledge.
Extensive experiments on 20 benchmarks across three CMR tasks verify the effectiveness of our method against QS. 
% Our code, datasets, and models are publicly available at \href{https://github.com/XLearning-SCU/2025-ICLR-TCR}{https://github.com/XLearning-SCU/2025-ICLR-TCR}.
\end{abstract}

\begin{IEEEkeywords}
Cross-modal Retrieval, Test-time Adaptation, Query Shift.
\end{IEEEkeywords}

\begin{figure*}[t]
\centering
\includegraphics[width=0.98\linewidth]{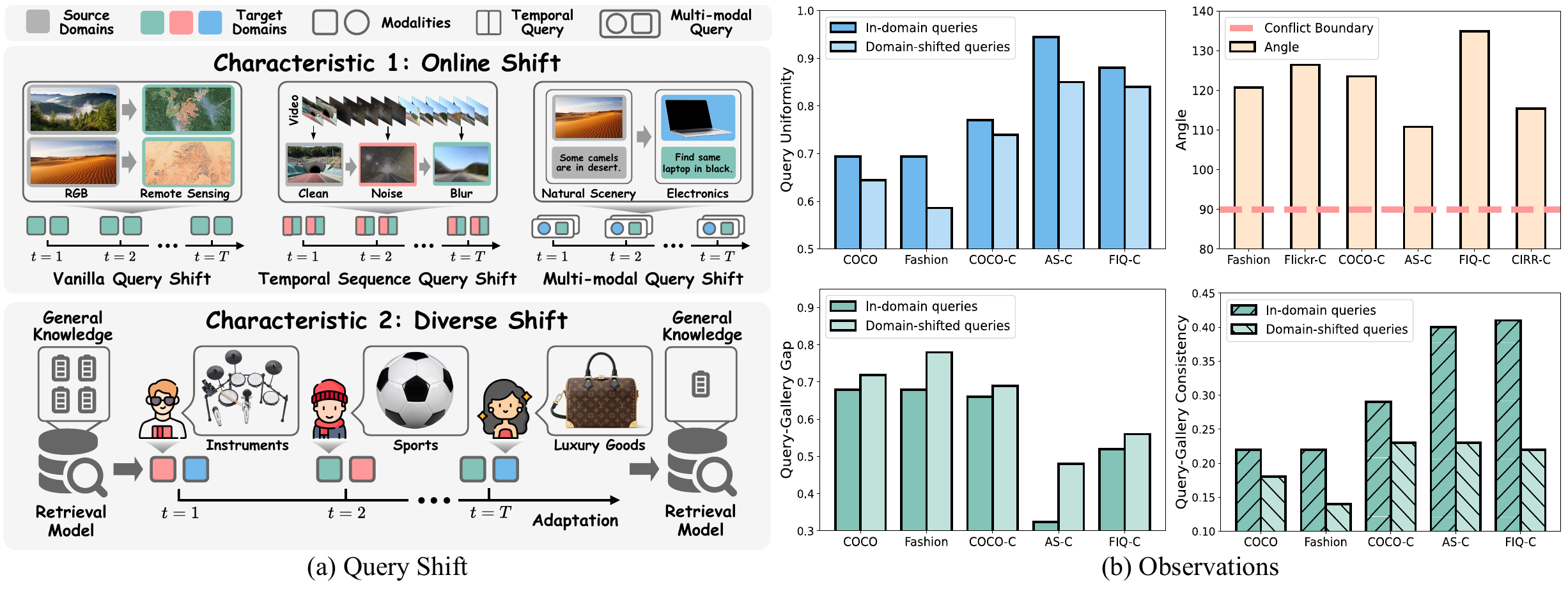}
\caption{
(a) \textbf{Query Shift.} 
The general-to-customized CMR paradigm would suffer from the query shift problem, which exhibits online shift and diverse shift characteristics.
On the one hand, in real-world applications, inquirers would incrementally offer queries, thus forming the online query stream.
Among various CMR tasks, queries exhibit not only uni-distribution shift but also more complex multi-distribution shift, both of which would invalidate the general-to-customized paradigm.
On the other hand, the submitted queries may originate from highly-personalized domains, \textit{e.g.}, in e-commerce transactions, queries are highly personalized and often involve products from various domains such as instruments, sports, and luxury goods.
Unfortunately, accommodating diverse queries would inevitably suffer from the dilemma of forgetting the general knowledge for CMR tasks.
(b) \textbf{Observations.} 
we study the QS problem for the image-text, video-audio, and composed image retrieval and reveal the following observations:
i) QS not only diminishes the uniformity of queries, but also amplifies the gap and weakens the consistency between the query and gallery sets, both of which would undermine the well-structured common space inherited from source models;
ii) simply employing the vanilla TTA method (\textit{e.g.}, Tent) on diverse queries would result in an obtuse angle between the gradients with respect to domain-specific and pre-training data.
In other words, the harmony between domain-specific knowledge and general knowledge becomes fragile during the adaptation process.
}
\label{fig: fig1}
\end{figure*}
\section{Introduction}
Cross-modal retrieval (CMR)~\cite{Yang_2022_CVPR,SCAN} aims to learn a well-established common space so that the semantically relevant candidates from the gallery set could be prioritized for the given queries, supporting numerous applications such as recommendation systems and search engines.
Recently, the general-to-customized paradigm driven by pre-trained models has emerged as the dominant approach for different CMR tasks such as image-text retrieval~\cite{ALBEF,MA}, video-audio retrieval~\cite{v-a,cav-mae}, and composed image retrieval~\cite{sprc,TME}.
Specifically, the pre-trained model first acquires general knowledge from large-scale data, which bridges the gap across heterogeneous modalities and facilitates the establishment of the common space. 
After that, the customization approaches, including but not limited to domain adaptation~\cite{da1,Prograd}, fully fine-tuning~\cite{BLIP,finetune1}, and low-rank adaptation~\cite{lora,moka}, enable reconciling the distribution shift problem between the general domain and the downstream domains (\textit{e.g.}, natural scenery domain or e-commerce domain).
As a result, the CMR models tailored for different domains could be derived.

Although the general-to-customized paradigm has achieved remarkable success in CMR, existing approaches overlook the inherent complexity of real-world queries and thus inevitably suffer from the query shift (QS) problem.
Specifically, QS exhibits the following two characteristics:
i) \textit{Online Shift}: real-world queries always arrive in an online manner and thus form the query stream, making it infeasible to collect the entire query set beforehand for customization approaches (\textit{e.g.}, domain adaptation).
Moreover, the forms of query vary substantially among different CMR tasks, exacerbating the complexity of the online shift challenge.
For example, as shown in Fig.~\ref{fig: fig1} (a), in video-audio retrieval, the video query might encounter the multiple distribution shifts of gaussian noise and motion blur when rapidly moving in the low-light conditions, while in composed image retrieval, both the reference image and textual modification might suffer from distribution shifts.
As verified in Table~\ref{tab: coco-o-image}–\ref{tab: qgs} and Supplementary Materials, the online shift challenge significantly undermines the effectiveness of the general-to-customized paradigm in various CMR tasks;
ii) \textit{Diverse Shift}: 
as the saying goes, ``Different strokes for different folks'', even with the domain customization, the CMR models cannot always meet the highly-diverse queries from different users or scenarios.
Clearly, tailoring CMR models for each user or scenario to address the above diverse shift challenge is exhausted and infeasible.
Instead, it is highly expected to continually acquire general knowledge from online and diverse queries, so that the CMR models could achieve self-evolution in different scenarios.
However, the self-evolution process of CMR models is susceptible to the overfitting issue on the highly-diverse query data and thus forget the general knowledge that is indispensable for establishing common space between heterogeneous modalities.

To deal with the QS problem, the most promising solution might be the Test-Time Adaptation (TTA) paradigm~\cite{tta_survey1,tta_survey2}, which could reconcile distribution shifts by updating the source model in an online manner.
However, existing TTA methods are intractable for tackling the QS problem due to the following reasons:
i) most TTA methods are specifically designed for the recognition task, which are inadequate for the CMR task that requires finding needles in a haystack. Intuitively, $N$-way matching in the CMR task is considerably harder than $K$-way classification in the recognition task, where $N$ and $K$ denote the number of candidate samples and categories with $N \gg K$;
ii) no pioneer work has investigated the negative impacts of either uni-distribution shift or the more challenging multi-distribution shift on CMR models, thus failing to address the QS problem in various CMR tasks;
iii) nearly all of them focus on accommodating data within a specific domain, while overlooking that real-world queries might originate from various domains, not to mention achieving the expected self-evolution capacity that incrementally learning general knowledge from diverse queries.

To specifically develop an online TTA solution to handle the QS problem for CMR, we present two core observations as illustrated in Fig.~\ref{fig: fig1} (b).
On the one hand, QS would disrupt the distribution of queries and hinder the alignment between the query and gallery sets. 
Specifically, QS would diminish the uniformity of the queries, prohibiting discrimination between queries in the common space. 
Moreover, QS would not only amplify the gap between the query and gallery sets, but also weaken the query-gallery consistency so that the desirable candidates cannot be recalled for the given query, both undermining the well-constructed common space established by the source models.
On the other hand, simply performing the vanilla TTA method on diverse queries may steer the model toward acquiring knowledge that conflicts with general knowledge, thereby undermining the retrieval ability inherited from the pre-trained model.

Based on the above observations, we propose a robust and harmonious TTA method for CMR against QS, dubbed Robust adaptation with quEry ShifT (REST), which consists of a query prediction refinement module, a QS-robust objective, and a gradient decoupling module.
To be specific, the query prediction refinement module refines the retrieval results to formulate the predictions for queries, thereby breaking the dilemma of finding needles in a haystack in CMR.
After that, a novel QS-robust objective function is employed on the refined query predictions to mitigate the negative impacts of QS, which is composed of the following three individual losses:
query uniformity loss performs contrast between queries and their respective centers, thus guaranteeing the discrimination between queries;
query-gallery gap loss narrows the difference between the query and gallery sets with the plausible constraint estimated from the off-the-shelf pre-trained models, thus inheriting the well-established common space; 
query-gallery consistency loss enhances the consistency between query and gallery sets by prioritizing confident query-candidate pairs and alleviating noisy ones with a self-adaptive threshold.
Beyond the designs for addressing online shift challenge, we further propose a gradient decoupling module to achieve harmonious adaptation against the diverse shift challenge.
In brief, during the adaptation process, REST dexterously manipulates gradients to prevent forgetting general knowledge and over-fitting to any specific domain, leading to a continually evolving CMR model.
In summary, the major contributions and novelties of this work could be summarized as follows:
\begin{itemize}
    \item We study a new problem in CMR, termed Query Shift (QS). 
    QS refers to the online query stream that follows a different distribution against the source domain and always exhibits diverse shift types.
    To the best of our knowledge, there are no prior CMR works toward handling the online shift challenge of queries, not to mention the study of the diverse shift challenge.
    \item We reveal the underlying negative impacts of the QS problem on various CMR tasks. 
    In brief, QS would not only undermine the well-established common space derived from the source model, but also steer the model toward acquiring domain-specific knowledge that conflicts with general knowledge, both of which would significantly degrade the CMR performance.
    \item To achieve robustness against QS, we propose a novel test-time adaptation method named REST. In brief, REST first formulates the query predictions to facilitate the existing recognition-oriented TTA methods for CMR. After that, REST adopts a QS-robust objective function to preserve the well-constructed common space by manipulating the core properties in CMR.
    Furthermore, REST employs a novel gradient decoupling module to regularize the adaptation process, thus preventing the retrieval model from forgetting the general knowledge.
    \item To comprehensively investigate the influence of the QS problem and evaluate the solutions, we construct 20 benchmarks featuring either synthetic corruptions or real-world distribution shifts.
    Extensive experiments conducted on these benchmarks demonstrate the effectiveness of the proposed REST across three representative CMR tasks: image–text, video–audio, and composed image retrieval.
\end{itemize}

\section{Related Works}
In this section, we provide a brief review of three topics highly related to this work, including unsupervised domain adaptation for cross-modal retrieval, test-time adaptation, and continual learning for cross-modal retrieval.

\subsection{Unsupervised Domain Adaptation for Cross-modal Retrieval}
The success of the existing CMR methods lies in the identical distribution assumption, \textit{i.e.}, the given test-time queries follow the same distribution as the source domain data.
Unfortunately, it is daunting and even impossible to hold the ideal assumption in real-world scenarios, thus inevitably leading to the distribution shift problem. 
Recently, some Unsupervised Domain Adaptation (UDA) methods have been proposed to reconcile the distribution shift without access to the source data, which has been widely used in CMR tasks.
Based on the way to achieve robustness against distribution shift, existing CMR-oriented UDA approaches could be roughly grouped into the following three categories: 
i) contrastive methods~\cite{UDACVR,DADA}, which adapt the source model to the target domain by maximizing the similarity between the query and its predicted positive candidate in the gallery while minimizing the similarity to negative ones;
ii) regularization methods, which estimate constraints from the source domain, \textit{e.g.}, prototypes~\cite{ACP} and covariance~\cite{CORAL}, and incorporate them as regularizations to guide the domain adaptation;
iii) domain alignment methods~\cite{DASG}, which mitigate the discrepancy between the target and source domains through approaches such as the maximum mean discrepancy minimization or mutual information maximization.

Despite the promising performance, existing CMR-oriented UDA methods require accessing the entire target-domain data.
As a result, these methods are infeasible for handling online query streams, severely limiting their practicality in real-time scenarios such as search engines.
Different from them, we propose a novel test-time adaptation approach for the CMR tasks, advancing CMR toward real-world applications.

\subsection{Test-time Adaptation}
Test-time Adaptation (TTA) has emerged as a dominant paradigm to reconcile the distribution shifts, which has been successfully applied to achieve online domain adaptation.
Based on the motivations of the methods, the existing TTA approaches could be coarsely divided into the following three categories:
i) online TTA methods~\cite{COTTA,DeYO}, which continually update the normalization layers by resorting to the unsupervised objectives, such as entropy minimization or its variants;
ii) robust TTA methods~\cite{EATA,SAR,TTA-Retrieval}, which achieve robust adaptation by filtering out the unreliable predictions with the manually set threshold;
iii) TTA beyond recognition~\cite{READ,MM-TTA,MSA-TTA}, which achieves online domain adaptation for the various tasks, including but not limited to segmentation~\cite{MM-TTA}, and sentiment analysis~\cite{MSA-TTA}.

Among existing approaches, the multi-modal TTA method READ~\cite{READ} is most relevant to our work, while embracing different motivations from our work.
In brief, READ aims to mitigate the negative impacts of unreliable modalities on multi-modal recognition, while REST focus on achieving robustness against online and diverse QS in CMR.
The experiments in Section~\ref{sec: experiments} further highlight the necessity of tailoring a novel TTA method for CMR.
Moreover, it should be pointed out that this paper differs from our conference version TCR~\cite{tcr} in three key aspects:
i) \textbf{Problem Extension:}
TCR only considers the online shift in image-text retrieval, while REST delves into QS from a more general and practical perspective.
On the one hand, we pioneer the explorations of multi-distribution shift in video-audio and composed image retrieval.
On the other hand, we further study the diverse shift issue, which refers to queries from different domains and poses a new challenge for the harmonious adaptation of CMR models;
ii) \textbf{Method Improvement:} 
to deal with online and diverse shift challenges, REST not only develops a noise-robust loss to enhance query-gallery consistency, but also proposes a novel gradient decoupling module to prevent the CMR model from forgetting the general knowledge;
iii) \textbf{Application Expansion:} 
REST is capable of handling a wider spectrum of CMR tasks and supports online adaptation across more CMR models, including single-stream (\textit{e.g.}, BLIP~\cite{BLIP}) and dual-stream (\textit{e.g.}, CLIP~\cite{CLIP}) CMR models.

\subsection{Continual Learning for Cross-modal Retrieval}
Continual learning aims to acquire new knowledge from the dynamic data distributions throughout the lifetime.
The key challenge for continual learning is general knowledge forgetting, \textit{i.e.}, adapting to a new domain would result in forgetting the knowledge previously learned from the source domain.
Based on the way to mitigate general knowledge forgetting, the existing continual learning methods for CMR tasks could be divided into the following three categories:
i) replay-based methods~\cite{CL_replay1}, which approximate and recover source-domain distributions by storing source-domain data or training a generative model;
ii) constraint-based methods~\cite{CL-contraint}, which employ penalty terms to explicitly constrain the variation of network parameters;
iii) architecture-based methods~\cite{CL-architecture1,CL-architecture2}, which allocate dynamic network parameters to store domain-specific knowledge.

The major differences between existing continual learning approaches and this work are given below. 
On the one hand, existing works focus on learning from a sequence of domains one by one, whereas our work addresses a more challenging and practical scenario, \textit{i.e.}, diverse queries from various domains.
On the other hand, existing works would lose either generalizability or effectiveness, while our work seeks to embrace the advantages of both.
Specifically, the replay- and architecture-based methods heavily rely on additional source-domain data and domain-specific networks, while the constraint-based methods strive for invariant parameters and thus hinder the acquisition of new knowledge.
In contrast, our work not only updates the parameters without requiring access to the source-domain data, but also continually acquires and accumulates general knowledge by self-adaptively eliminating gradients that conflict with general knowledge.

\begin{figure*}[t]
\centering
\includegraphics[width=0.95\linewidth]{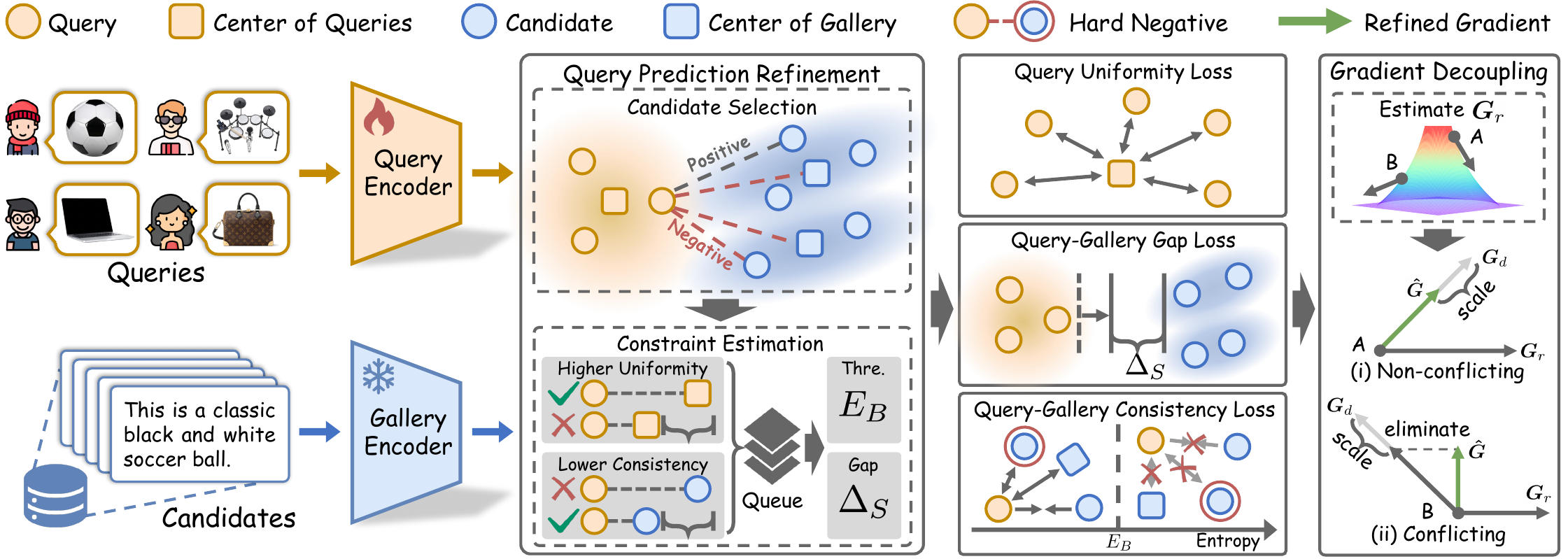}
\caption{
Overview of the proposed REST. 
For the given online queries from diverse domains, the query and gallery encoders are first adopted to map the queries and candidates into the common space.
The obtained embeddings are passed into the query prediction refinement module, which selects positive and valuable negative pairs for each query to formulate the refined query prediction.
After that, the positives with higher uniformity and lower consistency are adopted to estimate the threshold for noise filtering and the query-gallery gap that constrains the adaptation process.
Then, three independent losses are employed to achieve robust adaptation against QS.
Finally, the gradient decoupling module manipulates the strength and direction of the gradient to achieve harmonious adaptation, thus avoid forgetting the general knowledge.
}
\label{fig: fig2}
\end{figure*}

\section{Method}
In this section, we propose a novel test-time adaptation method for CMR tasks, dubbed Robust adaptation with quEry Shift (REST).
The section is structured as follows.
In Section~\ref{sec: problem formulation}, we present the formal
definition of the query shift problem and propose a simple baseline to make existing TTA methods feasible for CMR.
In Section~\ref{sec: Query Prediction Refinement}, we employ a query prediction refinement module to formulate query predictions and thus avoid finding needles in a haystack for CMR.
In Section~\ref{sec: Query Uniformity Learning}-\ref{sec: Query-Gallery Consistency Learning}, we design a novel objective function to achieve robust CMR against QS.
In Section~\ref{sec: Gradient Decoupling}, we introduce a gradient decoupling module to mitigate the negative impact brought by the diverse queries.

\subsection{Notations and Problem Formulation}
\label{sec: problem formulation}
Let $\mathcal{D}_{T} = \left\{\mathbf{X}^{Q}=\{x_{i}^{Q}\}_{i=1}^{N^{Q}}, \mathbf{X}^{G}=\{x_{j}^{G}\}_{j=1}^{N^{G}}\right\}$ denotes the target-domain data and $f_{\Theta_{S}}$ denotes the model pre-trained on the source-domain data $\mathcal{D}_{S}$, which consists of the two encoders, \textit{i.e.}, $f_{\Theta^{Q}_{s}}$ and $f_{\Theta^{G}_{s}}$.
For a given uni-modal, temporal, or multi-modal query $x^{Q}_{i}\in \mathcal{D}_{T}$, retrieval aims to associate the corresponding sample (\textit{i.e.}, candidate) $x^{G}_{j}\in \mathbf{X}^{G}$, where $Q$ and $G$ denote as query set and gallery set. 
However, the success of existing retrieval methods heavily relies on the identical distribution assumption, which is easily violated in real-world applications and thus leads to the query shift problem.
\begin{definition}[Query Shift] 
    For the source model $f_{\Theta_{S}}$ pre-trained on the source-domain data $\mathcal{D}_{S}$, query shift refers to the online query stream $\mathbf{x}^{Q}$ originating from the target-domain data $\mathcal{D}_{T}$ follows distinct distribution with $\mathcal{D}_{S}$, \textit{i.e.}, $\mathcal{P}\left(\mathcal{D}_{T}\right) \not\sim \mathcal{P}\left(\mathcal{D}_{S}\right)$, where $\mathcal{P}\left(\mathcal{\cdot}\right)$ denotes the distribution of the data.
\end{definition}
As a result, such a query shift problem would lead to the serious performance degradation of $f_{\Theta_{S}}$, which is further verified in the experiments. 
To enable $f_{\Theta_{S}}$ for online adaptation with query shift, the test-time adaptation (TTA) paradigm might offer a promising solution.
As discussed in Introduction, most existing TTA approaches are carefully designed for the recognition task, leaving an urgent need to bridge the gap between recognition and CMR tasks.
To endow the recognition-oriented TTA approaches with the capacity to tackle the CMR tasks, we first formulate the retrieval task as a query prediction process in analogy with the recognition task that assigns given samples to their corresponding categories.
Formally, for a given online batch of queries $\mathbf{x}^{Q}$ with size $B$, the corresponding query predictions is defined as,
\begin{equation}
\mathbf{p}=\operatorname{Softmax}\left(\mathbf{z}^{Q}\left(\mathbf{Z}^{G}\right)^{T}/\tau \right),
    \label{eq: prediction}
\end{equation}
where $\tau$ is the temperature, $\mathbf{z}^{Q}=f_{\Theta^{Q}_{s}}(\mathbf{x}^{Q})\in \mathbb{R}^{B\times D}$ and $\mathbf{Z}^{G}=f_{\Theta^{G}_{s}}(\mathbf{X}^{G})\in \mathbb{R}^{N^{G}\times D}$ are the $\ell 2$-normalized $D$-dimension embeddings for the given queries and candidates, respectively.

Thanks to the above formulation, most existing TTA approaches could be employed to tackle the query shift challenge with the following objective, 
\begin{equation}
        \min_{\tilde{\Theta}} \mathcal{L}_{TTA}\left(\mathbf{p}\right),
    \label{eq: tta}
\end{equation}
where $\mathcal{L}_{TTA}$ denotes the unsupervised objectives (\textit{e.g.}, entropy minimization) of existing TTA approaches, $\tilde{\Theta} \subseteq \Theta_{S}$ denotes the learnable parameters. 
However, unlike recognition tasks that typically involve a limited number of candidate classes, retrieval tasks aim to identify the most relevant counterpart for a given query from a large number of candidates in the gallery.
As a result, such a simple formulation struggles to meet the ``finding needles in a haystack" requirement of CMR tasks, thus achieving suboptimal effects as verified in Fig.~\ref{fig: ablation} (a).
Besides, as discussed in Introduction, such a simple baseline overlooks the underlying negative impacts of the query shift challenge and thus fails to achieve promising performance.
Specifically, on the one hand, the existing methods are unable to explicitly restore the well-structured common space inherited from $f_{\Theta_{S}}$.
On the other hand, these methods struggle to deal with the general knowledge forgetting problem induced by the diverse queries.

To tackle the query shift challenge, we propose Robust adaptation with quEry Shift (REST), which consists of the query prediction refinement module, the QS-robust objective function, and the gradient decoupling module.
As shown in Fig.~\ref{fig: fig2}, for the given batch of diverse online queries, REST first employs a novel query prediction refinement module to obtain the TTA-favorable query prediction $\hat{\mathbf{p}}$ and estimate the constraints to support the optimization of the objective function.
After that, we employ the following QS-robust objective function to achieve robust retrieval against query shift, \textit{i.e.},
\begin{equation}
\min_{\tilde{\Theta}} \mathcal{L}\left(\hat{\mathbf{p}}\right),
    \label{eq: overall}
\end{equation}
where $\mathcal{L}= \mathcal{L}_{U}+\mathcal{L}_{G}+\mathcal{L}_{C}$, with $\mathcal{L}_{U}$, $\mathcal{L}_{G}$, and $\mathcal{L}_{C}$ denoting the query uniformity loss, query-gallery gap loss, and query-gallery consistency loss, respectively. 
Beyond the designs of the objective function, we propose a novel gradient decoupling module to mitigate the general knowledge forgetting problem caused by diverse queries.
In the following, we will elaborate on each module and loss individually.

\subsection{Query Prediction Refinement}
\label{sec: Query Prediction Refinement}
In this section, we introduce the query prediction refinement module, which selects candidates for each given query to refine the prediction and estimates the constraints to facilitate the optimization of Eq.~\ref{eq: overall}.
\subsubsection{Candidate Selection}
Although the formulated query predictions in Eq.~\ref{eq: prediction} enable the recognition-oriented TTA approaches to address the CMR tasks, employing the vanilla query prediction in Eq.~\ref{eq: prediction} would result in either model underfitting or overfitting.
Specifically, due to the variable and substantial number of candidates, obtaining trustworthy query prediction for TTA is akin to finding needles in a haystack, resulting in model underfitting.
Although one straightforward solution is to employ a low temperature for further distinguishing the retrieval results in the query predictions, such a solution tends to cause $f_{\Theta_{S}}$ overfitting on noisy query predictions and thus leads to performance degradation.

To break the above dilemma, we propose to construct a subset of candidates and then establish new query predictions for the given queries. 
Specifically, for a given query $x^{Q}_{i}$ in the mini-batch, the corresponding candidates are obtained via
\begin{equation}
    \mathbf{x}^{G^{\prime}}_{i} = \Big[ 
    \underbrace{\mathcal{N}_{1}(x^{Q}_{i})}_{\text{positive}}, \ \  
    \underbrace{\cup_{j \neq i}\mathcal{N}_{K}(x^{Q}_{j}) \vphantom{\mathcal{C}(\mathbf{X}^{G})}}_{\text{sample negatives}},
    \underbrace{\mathcal{C}_{K}(\mathbf{X}^{G})}_{\text{cluster negatives}}
    \Big],
    \label{eq: new gallery}
\end{equation}
where $\mathcal{N}_{K}(\cdot)$ denotes the selected subset and $\mathcal{C}_{K}(\cdot)$ denotes the centroids of the gallery set, with $K$ indicating the number of selected samples or centroids.
In practice, $\mathcal{N}_{K}(\cdot)$ and $\mathcal{C}(\cdot)$ are implemented by query-to-candidate nearest neighborhood selection and $k$-means clustering, respectively.
Consequently, the refined query prediction could be formulated as follows,
\begin{equation}
    \hat{p}_{i}=\operatorname{Softmax}\left(z_{i}^{Q} \left(\mathbf{z}_{i}^{G^{\prime}}\right)^{T}/ \tau \right),
    \label{eq: new prediction}
\end{equation}
where $\mathbf{z}_{i}^{G^{\prime}}=f_{\Theta^{G}_{s}}(\mathbf{x}^{G^{\prime}}_{i})$, $\mathbf{\hat{p}}=[\hat{p}_{1},\hat{p}_{2},\cdots,\hat{p}_{B}]$ denotes the refined query predictions for the online-batched queries $\mathbf{x}^{Q}$.
The query prediction refinement manner embraces the following merits:
i) the valuable negatives would facilitate the consistency learning as described in Section~\ref{sec: Query-Gallery Consistency Learning}, which has been widely recognized in prior works~\cite{simclr,moco}.
Specifically, the centroids of the gallery set could be treated as the representative negatives, while the Top-$K$ selection manner could enhance the diversity of negatives;
ii) query prediction refinement module excludes some irrelevant samples in the gallery, thus preventing model overfitting to some extent.
iii) the excluded samples would avoid finding needles in a haystack for queries, thus alleviating the model underfitting.

\subsubsection{Constraint Estimation}
Some pioneer works~\cite{CL-contraint,CAN} have empirically found that the source-domain data could effectively constrain the domain adaptation process, thereby circumventing the general knowledge forgetting problem.
However, the source-domain data is always unavailable in real-world CMR applications.
To remedy this, we propose selecting some source-domain-like data from the positive query-candidate pairs to estimate desirable constraints that facilitate the adaptation process.
Specifically, we adopt a carefully-designed principle to select the positive pair $(x_{i}^{Q},x_{j}^{G})$ with $x_{j}^{G}=\mathcal{N}_{1}(x_{i}^{Q})$, formally,
\begin{equation}
    s_{i}=2\left(\|z_{i}^{Q}-z_{j}^{G}\|\right)-\left(\|z_{i}^{Q}-\overline{\mathbf{z}}^Q\|+\|z_{j}^{G}-\overline{\mathbf{z}}^{G}\|\right),
    \label{eq: principle}
\end{equation}
where $\overline{\mathbf{z}}^Q=\frac{1}{B}\sum_{i}^{B}z_i^Q$ and $\overline{\mathbf{z}}^{G}=\frac{1}{B}\sum_{i}^{B}f_{\Theta^{G}_{s}}(\mathcal{N}_{1}(x_{i}^{Q}))$ are the centers of the queries and the corresponding candidates in the common space, respectively.
In the implementation, we preserve a dynamic queue with size $B$ to store the embeddings $(\mathbf{z}^{Q_{B}},\mathbf{z}^{G_{B}})$ of the source-domain-like data, \textit{i.e.}, query-candidate pairs with the smallest $s_{i}$.
As illustrated in Fig.~\ref{fig: fig1} (b), the source-domain query-candidate pairs embrace small consistency and high uniformity, so the query-candidate pairs with low principle have higher probabilities to be source-domain-like data.

With the constructed source-domain-like data, we propose to estimate the gap between query and gallery sets in the source domain, which is further used in the query-gallery gap learning (Section~\ref{sec: Query-Gallery Gap Learning}). 
Mathematically,
\begin{equation}
    \Delta_{S}=\left\| \frac{1}{B}\sum_{i}^{B}z^{Q_{B}}_{i}- \frac{1}{B}\sum_{j}^{B}z^{G_{B}}_{j} \right\|.
    \label{eq: estimate query-gallery gap}
\end{equation}
As another by-product, a desirable threshold for identifying the noisy $\hat{p}$ could be adaptively determined as follows,
\begin{equation}
    E_{B}=\max_{i=1, \dots, B}  E\left( x_{i}^{Q_{B}} \right),
\label{eq: entropy threshold}
\end{equation}
where $E\left(\cdot\right)$ denotes the entropy of the given query and $x_{i}^{Q_{B}}$ denotes the $i$-th query in the source-domain-like data.

\subsection{Query Uniformity Learning}
\label{sec: Query Uniformity Learning}
As discussed in Introduction, query shift would diminish the uniformity of the queries, resulting in queries with low discrimination in the common space.
Consequently, the retrieval model struggles to distinguish among the confused queries, leading to severe performance degradation.
To solve the problem, we propose a novel query uniformity loss as follows,
\begin{equation}
    \mathcal{L}_{U} = \frac{1}{B}\sum_{i}^{B}exp\left(-\|z_i^Q - \overline{\mathbf{z}}^Q\| \right).
    \label{eq: loss uniformity}
\end{equation}
Such a behavior could impose the contrasts between queries and their respective centers, thereby explicitly enhancing the uniformity of the queries.

\subsection{Query-Gallery Gap Learning}
\label{sec: Query-Gallery Gap Learning}
Apart from the negative impacts on the distribution of queries, query shift would amplify the gap between query and gallery sets, undermining the well-constructed query-gallery alignment established by the source models.
To address the problem, one feasible solution is to restore the query-gallery gap in the source domain.
Thanks to the estimated constraint $\Delta_{S}$ in Eq.~\ref{eq: estimate query-gallery gap}, we propose the following query-gallery gap learning loss, \textit{i.e.},
\begin{equation}
    \mathcal{L}_{G}=\left(\Delta_{T} - \Delta_{S} \right)^{2},
    \label{eq: loss_mmg}
\end{equation}
where $\Delta_{T}=\left\|\overline{\mathbf{z}}^Q-\overline{\mathbf{z}}^{G}\right\|$ denotes the query-gallery gap in the target domain.
The key idea behind $\mathcal{L}_{G}$ is to inherit the well-constructed common space of the source model by rectifying the query-gallery gap, thus achieving robustness against QS.
Notably, as illustrated in Fig.~\ref{fig: observation}, over-eliminating the query-gallery gap would not boost or even degrade the performance of the retrieval model.
A similar phenomenon has also been preliminarily observed in multi-modal representation learning~\cite{MindGap}, which further verifies that rectifying the query-gallery gap to a plausible constraint is reasonable.

\subsection{Query-Gallery Consistency Learning}
\label{sec: Query-Gallery Consistency Learning}
For the CMR tasks, it is widely acknowledged that the queries and their corresponding candidates should exhibit consistency in the common space~\cite{MA,cross_modal4,yin1}.
However, query shift would diminish the consistency between query and gallery sets, thus preventing the queries from associating with the correct candidates. 
To remedy this, we propose the following query-gallery consistency loss,
\begin{equation}
    \mathcal{L}_{C}=\mathcal{L}_{REM}+\mathcal{L}_{RHM},
\end{equation}
where $\mathcal{L}_{REM}$ denotes the robust entropy minimization loss, $\mathcal{L}_{RHM}$ denotes the robust hard mining loss. 
In the following, we will elaborate on each loss individually.
\subsubsection{Robust Entropy Minimization}
As the most widely used paradigm to achieve consistency between two heterogeneous sets, contrastive loss aims to maximize the similarity of positive pairs while minimizing that of negative pairs.
The key to contrastive loss lies in constructing the valuable negatives for each positive, thus facilitating the identification between associated and mismatched pairs.
Thanks to the query predictions refinement module in Section~\ref{sec: Query Prediction Refinement}, we have chosen the representative and diverse negatives for each given positive pair.
However, due to the unavailability of ground-truth correspondences, the estimated positive pairs in Eq.~\ref{eq: new prediction} might be mismatched, resulting in noisy query predictions.

To achieve robustness against the noisy query predictions, we propose the following robust entropy minimization loss,
\begin{equation}
    \begin{gathered}
        \mathcal{L}_{REM}=\frac{1}{\sum_{i} \mathbb{I}_{\{W(x_{i}^{Q})\neq 0\}}} \sum_{i=1}^{B} W(x_{i}^{Q}) E(x_{i}^{Q})\\
        W(x_{i}^{Q})=\max \left(1-E(x_{i}^{Q})/E_{B}, 0\right)
    \end{gathered},
    \label{eq: loss_rem}
\end{equation}
where $E_{B}$ is the self-adaptive threshold estimated in Eq.~\ref{eq: entropy threshold}, and $\mathbb{I}_{\{\cdot\}}$ is an indicator function evaluating to $1$ \textit{i.f.f.} the condition is satisfied. 
Such behavior could achieve robustness against noise by excluding unreliable query predictions from adaptation and assigning higher weights to trustworthy query predictions.

\subsubsection{Robust Hard Mining}
Although the entropy-based objective in Eq.~\ref{eq: loss_rem} could achieve consistency between query and gallery sets to some extent, it is inadequate for distinguishing the discrepancy between positives and the hard negatives, resulting in suboptimal query-gallery consistency.
\begin{theorem}
    Let $\hat{p}$ denotes the refined query prediction of the given query $x$, where $\hat{p}_{i}$ denotes the probability that $x$ is associated with the $i$-th candidate. For the entropy minimization objective $\mathcal{L}_{EM}=-\sum_{i}\hat{p}_{i}\log \hat{p}_{i}$, the gradient with respect to the probability of easy negative $\hat{p}_{m}$ exceeds that of the hard negative $\hat{p}_{n}$ whenever $\hat{p}_{m}< \hat{p}_{n} \leq \frac{1}{e}$. 
    Mathematically, we have
    \begin{equation}
        \left|\frac{\partial \mathcal{L}_{EM}}{\partial \hat{p}_{m}}\right| > \left|\frac{\partial \mathcal{L}_{EM}}{\partial \hat{p}_{n}}\right|
        \label{eq: theorem_entropy}
    \end{equation}
    where $|\cdot|$ denotes the absolute value.
    \label{theorem: entropy minimization}
\end{theorem}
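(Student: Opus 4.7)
The plan is to treat each $\hat{p}_i$ as an independent variable in the expression $\mathcal{L}_{EM} = -\sum_i \hat{p}_i \log \hat{p}_i$ and reduce the claim to a simple monotonicity fact about the single-variable function $g(p) = -p\log p$. This direct, calculus-based route is the most natural because the claim is purely an algebraic inequality on gradients, with no structural constraints on the prediction vector beyond the hypothesis $\hat{p}_m < \hat{p}_n \le 1/e$.

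First I would compute the partial derivative explicitly:
\begin{equation}
    \frac{\partial \mathcal{L}_{EM}}{\partial \hat{p}_{i}} = -\log \hat{p}_{i} - 1,
\end{equation}
which is immediate from $\tfrac{d}{dp}\bigl(-p\log p\bigr) = -\log p - 1$. Next, I would localize attention to the regime $\hat{p}_i \le 1/e$. Since $\log \hat{p}_i \le -1$ there, the quantity $-\log \hat{p}_i - 1$ is nonnegative, so the absolute value can be removed:
\begin{equation}
    \left|\frac{\partial \mathcal{L}_{EM}}{\partial \hat{p}_{i}}\right| = -\log \hat{p}_{i} - 1 \qquad \text{whenever } \hat{p}_{i} \le \tfrac{1}{e}.
\end{equation}

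Finally, I would apply this to both indices. Under the hypothesis $\hat{p}_m < \hat{p}_n \le 1/e$, both derivatives lie in the simplified regime above, and the strict monotonicity of $\log$ yields $-\log \hat{p}_m > -\log \hat{p}_n$, hence
\begin{equation}
    \left|\frac{\partial \mathcal{L}_{EM}}{\partial \hat{p}_{m}}\right| = -\log \hat{p}_{m} - 1 \;>\; -\log \hat{p}_{n} - 1 = \left|\frac{\partial \mathcal{L}_{EM}}{\partial \hat{p}_{n}}\right|,
\end{equation}
which is exactly \eqref{eq: theorem_entropy}.

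There is essentially no hard step here; the only subtlety worth flagging carefully is the sign analysis that licenses dropping the absolute value, which crucially depends on the $1/e$ threshold in the hypothesis. I would also include a brief interpretive remark explaining why this inequality is the interesting one for the paper: it shows that plain entropy minimization exerts its strongest pull on \emph{easy} negatives (small $\hat{p}_m$) rather than on \emph{hard} negatives (larger $\hat{p}_n$ still below the confidence level $1/e$), thereby motivating the robust hard-mining component $\mathcal{L}_{RHM}$ introduced to re-emphasize the hard cases that entropy minimization underweights.
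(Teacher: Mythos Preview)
Your proof is correct and follows essentially the same approach as the paper: compute the derivative $-\log\hat{p}_i-1$, observe it is nonnegative on $(0,1/e]$ so the absolute value drops, and then use monotonicity. The only cosmetic difference is that the paper appeals to the second derivative $-1/\hat{p}_i<0$ to certify that the first derivative is strictly decreasing, whereas you invoke the monotonicity of $\log$ directly; these are equivalent one-line observations.
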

The according proof is presented in Supplementary Materials.
Theorem~\ref{theorem: entropy minimization} illustrates that the entropy minimization paradigm tends to optimize easy negatives rather than the hard ones. 
Note that the condition $\hat{p}_{m}< \hat{p}_{n} \leq \frac{1}{e}$ is easily satisfied in the retrieval task, as the substantial candidates would result in low probabilities for negatives.

To further enhance the consistency between query and gallery sets, we propose the following robust hard mining loss to distinguish the differences between positive and hard negatives in a self-adaptive manner, \textit{i.e.},
\begin{equation}
    \begin{gathered}
        \mathcal{L}_{RHM}=\frac{1}{\sum_{i} \mathbb{I}_{\{W(x_{i}^{Q})\neq 0\}}}\sum_{i}^{B}W(x_{i}^{Q})H(x_{i}^{Q})\\
         H(x_{i}^{Q})=-\log c_{ij} + \log c_{ik}
    \end{gathered},
    \label{eq: loss_rhm}
\end{equation}
where $c_{ij}\in [0,1]$ denotes the consistency of the estimated positive and $c_{ik}\in [0,1]$ (\textit{i.e.}, $k\neq j$) denotes the consistency of the corresponding hard negative, with larger values indicating stronger semantic association.
In the implementation, the consistency could be derived by either the cosine similarity from the dual-stream network or the matching score from the single-stream network. 
See more details in Supplementary Material~\textsc{IV}-E.

\subsection{Gradient Decoupling}
\label{sec: Gradient Decoupling}
As discussed in Introduction, directly performing TTA on the diverse queries might mislead the model towards forgetting the general knowledge.
To be specific, we first give a formal definition of the gradients with respect to the general knowledge from the source model during the adaptation, \textit{i.e.}, general direction. Formally,
\begin{definition}
    General direction refers to the gradient of the Kullback-Leibler (KL) divergence between the predictions of the source model and the TTA-tuned model. Mathematically,
    \begin{equation}
        \begin{gathered}
            \boldsymbol{G}_{r}=\nabla D_{KL}, \ \
            D_{KL}=-\frac{1}{B}\sum_{i=1}^{B}\sum_{j}\hat{p}_{ij}^{S}\log \frac{\hat{p}_{ij}}{\hat{p}_{ij}^{S}},
        \end{gathered}
    \end{equation}
    where $\hat{p}_{ij}^{S}$ denotes the refined query prediction of the source model $f_{\Theta_{S}}$.
    \label{def: general knowledge}
\end{definition}
Intuitively, the well-pretrained source model has already mined general knowledge from large-scale data, whose zero-shot query prediction could reflect the general knowledge to some extend.
Similarly, we give the formal definition of gradients with respect to the domain-specific knowledge during the adaptation, \textit{i.e.}, domain-specific direction. Formally,
\begin{definition}
    Domain-specific direction refers to the gradient of the objective function $\mathcal{L}_{TTA}$ during the adaptation process, \textit{e.g.}, entropy-based objectives,
    \begin{equation}
        \boldsymbol{G}_{d}=\nabla \mathcal{L}_{TTA}.
    \end{equation}
    \label{def: domain-specific knowledge}
\end{definition}
However, as illustrated in Fig.~\ref{fig: fig1}(b) and Table~\ref{tab: gra_effect}, the domain-specific direction $\boldsymbol{G}_{r}$ might conflict with the general direction $\boldsymbol{G}_{d}$, thus resulting in the general knowledge forgetting problem.
To address the problem, we propose a novel Gradient Decoupling (GD) module, which eliminates the gradient conflicting with the general direction and imposes penalties to avoid over-optimization.
Specifically, we decompose the domain-specific direction $\boldsymbol{G}_{d}$ into:
\begin{itemize}
    \item Component $\boldsymbol{G}_{\perp}$ orthogonal to the general direction, which denotes the non-conflicting domain-specific knowledge.
    \item Component $\boldsymbol{G}_{\|}$ parallel to the general direction, which could be the same as $\boldsymbol{G}_{r}$ or the opposite.
\end{itemize}
Clearly, the opposite of $\boldsymbol{G}_{r}$ indicates the conflicting update that should be discarded to avoid general knowledge forgetting.
To this end, we refine the update gradient as follows,
\begin{equation}
    \boldsymbol{\hat{G}}= \begin{cases}W_d \ \boldsymbol{G}_{\perp} + W_d \ \boldsymbol{G}_{\|}, & \text { if } \boldsymbol{G}_{d} \cdot \boldsymbol{G}_{r} \geq 0 \\ 
    W_d \ \boldsymbol{G}_{\perp}, & \text { if } \boldsymbol{G}_{d} \cdot \boldsymbol{G}_{r} < 0 
    \end{cases},
\end{equation}
where $W_d=\sigma(D_{KL})$ denotes the self-adaptive weight that regulates the magnitude of updating,  $\sigma(\cdot)$ indicates an inverse function, and we set $\sigma(x)=exp(-x)$ for simplicity in the implementation.
$\boldsymbol{G}_{d} \cdot \boldsymbol{G}_{r} < 0$ indicates that the optimization direction conflicts with the general knowledge, and vice versa. 

\begin{theorem}
The refined gradient $\boldsymbol{\hat{G}}$ never conflicts with the general direction $\boldsymbol{G}_r$, \textit{i.e.}, $\boldsymbol{\hat{G}}\cdot \boldsymbol{G}_{r} \geq 0$
\label{theorem: gradient}
\end{theorem}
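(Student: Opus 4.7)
The plan is to argue by cases on the sign of $\boldsymbol{G}_d\cdot\boldsymbol{G}_r$, directly following the definition of $\boldsymbol{\hat{G}}$. The key observations I would leverage are: (i) the decomposition $\boldsymbol{G}_d = \boldsymbol{G}_\perp + \boldsymbol{G}_\|$ is an orthogonal decomposition with respect to $\boldsymbol{G}_r$, so $\boldsymbol{G}_\perp \cdot \boldsymbol{G}_r = 0$ by construction; (ii) the scalar weight $W_d = \exp(-D_{KL})$ is strictly positive, so it cannot flip any sign.

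First, I would handle the case $\boldsymbol{G}_d \cdot \boldsymbol{G}_r \geq 0$. Here $\boldsymbol{\hat{G}} = W_d(\boldsymbol{G}_\perp + \boldsymbol{G}_\|) = W_d \boldsymbol{G}_d$, so $\boldsymbol{\hat{G}} \cdot \boldsymbol{G}_r = W_d (\boldsymbol{G}_d \cdot \boldsymbol{G}_r) \geq 0$ since $W_d > 0$. Next, I would handle the case $\boldsymbol{G}_d \cdot \boldsymbol{G}_r < 0$. In this regime $\boldsymbol{\hat{G}} = W_d \boldsymbol{G}_\perp$, and since $\boldsymbol{G}_\perp$ is by definition orthogonal to $\boldsymbol{G}_r$, we immediately get $\boldsymbol{\hat{G}} \cdot \boldsymbol{G}_r = W_d(\boldsymbol{G}_\perp \cdot \boldsymbol{G}_r) = 0 \geq 0$.

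Combining the two cases gives $\boldsymbol{\hat{G}} \cdot \boldsymbol{G}_r \geq 0$ in all situations, which is what we want. There is no real obstacle here; the statement is essentially a direct consequence of the design of the gradient decoupling rule, since the rule explicitly removes the component $\boldsymbol{G}_\|$ exactly in the regime where it would contribute negatively to the inner product with $\boldsymbol{G}_r$. The only minor subtlety worth stating explicitly is the positivity of $W_d$, which follows from $W_d = \exp(-D_{KL})$ and the non-negativity of the KL divergence, ensuring $W_d \in (0,1]$ and hence that scaling by $W_d$ preserves the sign of the inner product in Case~1.
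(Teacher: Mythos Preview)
Your proposal is correct and follows essentially the same two-case argument as the paper's proof, using the orthogonality $\boldsymbol{G}_\perp\cdot\boldsymbol{G}_r=0$ in both cases. The only cosmetic difference is that in Case~1 the paper expands the inner product via $\boldsymbol{G}_\|$ (obtaining $W_d\,\boldsymbol{G}_\|\cdot\boldsymbol{G}_r\geq 0$), whereas you collapse $\boldsymbol{G}_\perp+\boldsymbol{G}_\|$ back to $\boldsymbol{G}_d$ and use $W_d(\boldsymbol{G}_d\cdot\boldsymbol{G}_r)\geq 0$ directly; your explicit note that $W_d=\exp(-D_{KL})>0$ is a useful addition that the paper leaves implicit.
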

The proof is presented in Supplementary Materials.
The proposed gradient decoupling module would embrace the following two advantages.
On the one hand, according to Theorem~\ref{theorem: gradient}, the refined gradient $\boldsymbol{\hat{G}}$ eliminates the component conflicts with general knowledge, thus preserving the general knowledge.
On the other hand, when $D_{KL}$ enlarges, \textit{i.e.}, the CMR model might forget general knowledge, the self-adaptive weight would impose penalties on the gradients to avoid over-optimization on any particular domain.
Note that a few works have explored harmonious learning in UDA and few-shot scenarios~\cite{gradient,Prograd}, while the proposed REST differs from them in the following aspects:
i) most of them rely on the ideal assumption that the entire test-time set is always available, whereas the REST is designed for online adaptation, making it more flexible for real-world applications.
ii) existing works overlook that continuous adaptation on diverse queries might overfit to the domain-specific data, while REST imposes penalties on the gradients to avoid overfitting issues.

\section{Experiments}
\label{sec: experiments}
To verify the effectiveness of REST for achieving QS-robust retrieval, we conduct expensive experiments on three different CMR tasks, including image-text retrieval, video-audio retrieval, and composed image retrieval.
Due to space limitation, we present more experimental details and results in Supplementary Materials.

\begin{table}[htbp]
\centering
\caption{Configurations of settings. ``General'' denotes the pre-training data, ``Specific'' denotes the down-streaming data, ``Corrupt'' denotes the manually injected corruptions. ``Online'' indicates whether the test-time queries arrive in an online manner, and ``Diverse'' indicates whether the test-time queries are derived from various domains.}
\label{tab: settings}
\Large
\resizebox{0.9\linewidth}{!}{
\begin{tabular}{c|cc|cc}
% \hline
\multicolumn{1}{c|}{\textbf{Settings}} & \multicolumn{1}{c}{\textbf{Source}} & \multicolumn{1}{c|}{\textbf{Target}} & \multicolumn{1}{c}{\textbf{Online}} & \multicolumn{1}{c}{\textbf{Diverse}} \\ \hline
\multirow{2}{*}{OQS} & General  & Specific              & $\checkmark$ & $\times$ \\ \cline{2-5}
                              & Specific & Specific + Corrupt & $\checkmark$ & $\times$ \\ \hline
\multirow{2}{*}{DQS} & General  & Specific              & $\checkmark$ & $\checkmark$ \\ \cline{2-5}
                              & Specific & Specific + Corrupt & $\checkmark$ & $\checkmark$ \\ \hline
\end{tabular}
}
\end{table}

\subsection{Experiment Configurations}
In this section, we elaborate on the experiment configurations of REST, including settings, benchmarks, and implementation details.

\textbf{Settings:}
% Query shift refers to the test-time out-of-distribution (OOD) query data, which always exhibits the online and diverse characteristics as discussed in Introduction.
To investigate the influence of CMR with QS, we employ the Online Query Shift (OQS) and the Diverse Query Shift (DQS) settings for extensive evaluations, each setting involves two types of distribution shift, \textit{i.e}, simulated one with widely-used corruptions and real-world one varying from different domains.
As shown in Table~\ref{tab: settings}, for the real-world distribution shift, we regard pre-training and down-streaming data as the source and target domain, respectively.
While for the simulated distribution shift, with the down-streaming data spanning the source domain, we inject synthetic corruptions into the queries from the down-streaming data and thus obtain the target domain data.
Note that the simulated distribution shift only occurs on the query set, while the real-world one might simultaneously exhibit on both the query and gallery sets.

\textbf{Benchmarks:}
To facilitate the investigation for TTA with query shift, we construct 20 benchmarks featuring either synthetic corruptions or real-world distribution shifts.
To be specific, ten corruption-injected benchmarks are derived by introducing OQS and DQS settings on five widely-used CMR datasets, including COCO~\cite{COCO}, Flickr~\cite{Flickr}, AudioSet~\cite{audioset}, FIQ~\cite{FashionIQ}, and CIRR~\cite{cirr} datasets.
More specifically,
\begin{itemize}
    \item \textbf{Image-text Retrieval}: we introduce 16 types of image corruptions and 15 types of text corruptions across COCO and Flickr datasets, which exhibit one type of distribution shift on either the image or text modality.
    \item \textbf{Video-audio Retrieval}: we introduce 12 types of video corruptions and 6 types of audio corruptions in the AudioSet dataset, which exhibits multiple types of distribution shift on either the video or audio modality.
    \item \textbf{Composed Image Retrieval}: we introduce 16 types of corruptions to the reference images and 15 types of corruptions to the textual modifications in the FIQ and CIRR datasets, which exhibit distribution shifts on both the image and text modalities.
\end{itemize}
For benchmarks with real-world distribution shift, we adopt the following widely used datasets: Flickr, COCO, Fashion-Gen~\cite{Fashion-Gen}, Nocaps~\cite{Nocaps}, RSICD~\cite{RSICD} and RSITMD~\cite{RSITMD} for image-text retrieval, AudioSet and VGGSound~\cite{vggsound} for video-audio retrieval, and FIQ and CIRR for composed image retrieval.
Note that, the benchmarks under the OQS setting are denoted with the suffix ``-O'', while those under the DQS setting are denoted with ``-D''.

\textbf{Implementation Details }
REST is a model-agnostic TTA method that could endow most existing pre-trained models with robustness against the query shift problem.
To support this claim, we adopt BLIP~\cite{BLIP}, CAV-MAE~\cite{cav-mae}, BLIP-2~\cite{blip2} as the pre-trained models, since they are the widely used pre-trained models for image-text, video-audio, and composed image retrieval.
For the evaluation, we adopt the recall metric~\cite{cross_modal5,BLIP} for the various CMR tasks, including image-to-text retrieval (a.k.a. TR), text-to-image retrieval (a.k.a. IR), video-audio retrieval (a.k.a. AR), audio-video retrieval (a.k.a. VR), composed image retrieval (a.k.a. CIR).
Following~\cite{Tent}, REST would continuously update the parameters in the source model $f_{\Theta^{Q}_{s}}$ using online mini-batches of queries, with the batch size set to $32$ for composed image retrieval and $64$ for the other tasks.
To be specific, the learnable parameters in $\tilde{\Theta}$ (Eq.~\ref{eq: overall}) correspond to the Layer Normalization (LN) layers in the implementation.
For the hyperparameter configuration, the temperature $\tau$ in Eq.~\ref{eq: prediction}, and the selected number $K$ of both neighbors and centroids in Eq.~\ref{eq: new prediction} are fixed as $0.02$ and $10$ for all experiments, respectively.

\begin{table*}[t]
    % \vspace{-0.2in}
    \caption{Comparisons with state-of-the-art methods on COCO-O benchmark under \textbf{\textsc{OQS on image modality}} regarding Recall@1 metric. The best results are marked in \textbf{bold} and the second best results are \underline{underlined}. REST is the journal extension of TCR.}
    \label{tab: coco-o-image}
    % \vspace{-0.1in}
\newcommand{\tabincell}[2]{\begin{tabular}{@{}#1@{}}#2\end{tabular}}
 \begin{center}
 \begin{threeparttable}
 \Large
    \resizebox{0.98\linewidth}{!}{
 	\begin{tabular}{l|cccc|cccc|cccc|cccc|>{\columncolor{blue!8}}c}
 	\multicolumn{1}{c}{} & \multicolumn{4}{c}{Noise} & \multicolumn{4}{c}{Blur} & \multicolumn{4}{c}{Weather} & \multicolumn{4}{c}{Digital}  \\
 	 Methods & Gauss. & Shot & Impul. &Speckle & Defoc. & Glass & Motion & Zoom & Snow & Frost & Fog & Brit. & Contr. & Elastic & Pixel & JPEG & Avg.  \\
    \cmidrule{1-18}
        BLIP &  43.4 & 46.3 & 43.2 & 57.3 & 43.3 & 68.0 & 39.7 & 8.4 & 32.3 & 52.2 & 57.0 & 66.8 & 36.0 & 41.3 & 20.6 & 63.7 & 45.0 \\ 
        ~~$\bullet~$Tent & 41.6 & 40.5 & 37.9 & 54.0 & 44.7 & 65.1 & 39.6 & 8.3  & 31.9 & 48.7 & 56.3 & 66.5 & 31.8 & 40.3 & 19.2 & 62.3 & 43.0 \\ 
        ~~$\bullet~$PL & 43.2 & 44.9 & 29.9 & 57.3 & 13.3 & 71.8 & 31.0 & 1.1 & 25.0 & 50.5 & 57.9 & 70.3 & 14.4 & 38.5 & 10.4 & 65.8 & 39.1 \\
        ~~$\bullet~$SHOT & 47.2 & 42.9 & 34.4 & 63.8 & 10.8 & 71.9 & 6.8 & 0.5 & 8.3 & 52.2 & 51.8 & 70.6 & 7.9 & 39.8 & 4.7 & 67.7 & 36.3 \\
        ~~$\bullet~$EATA & 41.4 & 50.3 & 35.7 & 63.1 & 49.8 & 72.2 & 46.2 & 6.9  & 45.6 & 56.7 & 62.5 & 71.4 & 43.6 & 51.3 & 25.6 & 67.0 & 49.3  \\
        ~~$\bullet~$SAR & 42.3 & 51.5 & 37.5 & 61.8 & 40.3 & 71.5 & 32.8 & 6.2  & 38.0 & 56.2 & 59.1 & 70.6 & 31.1 & 53.5 & 17.5 & 66.4 & 46.0  \\
        ~~$\bullet~$READ & 45.8 & 48.4 & 37.2 & 59.9 & 44.5 & 71.8 & 46.6 & 11.5 & 39.9 & 49.9 & 58.4 & 70.3 & 35.8 & 45.0 & 18.8 & 66.2 & 46.9  \\
        ~~$\bullet~$COME & 18.9 & 37.1 & 25.5 & 60.2 & 8.8 & 71.4 & 7.8 & 0.6 & 12.7 & 31.3 & 40.3 & 70.4 & 6.8 & 31.2 & 3.3 & 67.7 & 30.9 \\
        ~~$\bullet~$TSA & 40.8 & 43.9 & 42.3 & 61.0 & 38.2 & 70.2 & 37.4 & 11.2 & 32.9 & 49.2 & 62.4 & 69.3 & 40.3 & 39.4 & 21.0 & 64.6 & 45.2 \\
        \rowcolor{pink!30}~~$\bullet~$TCR & \underline{51.5} & \underline{55.4} & \underline{53.8} & \underline{64.7} & \underline{58.3} & \textbf{73.3} & \underline{57.0} & \underline{33.6} & \underline{58.0} & \underline{63.9} & \underline{70.8} & \textbf{72.5} & \underline{57.6} & \underline{65.1} & \underline{43.8} & \underline{68.1} & \underline{59.2} \\
        \rowcolor{pink!30}~~$\bullet~$REST & \textbf{54.4} & \textbf{56.8} & \textbf{56.2} & \textbf{65.5} & \textbf{58.6} & \underline{72.8} & \textbf{58.6} & \textbf{37.7} & \textbf{59.7} & \textbf{64.1} & \textbf{71.1} & \underline{71.8} & \textbf{61.1} & \textbf{65.6} & \textbf{45.4} & \textbf{68.8} & \textbf{60.5} \\
    \cmidrule{1-18}
	\end{tabular}
	}
	 \end{threeparttable}
	 \end{center}
\end{table*}

\begin{table*}[t]
\centering
\caption{Comparisons with state-of-the-art methods on FIQ-O benchmark under \textbf{\textsc{OQS on both image and text modalities}} regarding Recall@10 metric. $^{*}$ denotes the image corruption and $^{\dagger}$ denotes the text corruption.}
\label{tab: fiq-c}
% \Large
\resizebox{0.95\linewidth}{!}{
\begin{tabular}{l|ccc|ccc|ccc|ccc|>{\columncolor{blue!8}}c}
\multicolumn{1}{c}{} & \multicolumn{3}{c}{Noise$^{*}$} & \multicolumn{3}{c}{Blur$^{*}$} & \multicolumn{3}{c}{Weather$^{*}$} & \multicolumn{3}{c}{Digital$^{*}$}   \\
 Methods & Char.$^{\dagger}$ & Word$^{\dagger}$ & Sent.$^{\dagger}$ & Char.$^{\dagger}$ & Word$^{\dagger}$ & Sent.$^{\dagger}$ & Char.$^{\dagger}$ & Word$^{\dagger}$ & Sent.$^{\dagger}$ & Char.$^{\dagger}$ & Word$^{\dagger}$ & Sent.$^{\dagger}$ & Avg. \\
\hline
BLIP-2 & 23.7 & 28.1 & 41.3 & 23.1 & 29.1 & 41.7 & 24.3 & 29.8 & 42.9 & 28.9 & 34.2 & 46.7 & 32.8 \\
~~$\bullet~$Tent & 25.1 & 29.2 & 41.4 & 23.5 & 29.7 & 42.1 & 25.7 & 30.5 & 43.3 & 28.7 & 34.8 & 47.3 & 33.4 \\
~~$\bullet~$PL & 24.4 & 28.6 & 41.5 & 23.2 & 29.4 & 42.0 & 25.3 & 30.3 & 43.8 & 28.8 & 34.8 & 47.1 & 33.3 \\
~~$\bullet~$SHOT & 25.1 & 29.2 & 41.4 & 23.5 & 29.7 & 42.1 & 25.6 & 30.5 & 43.3 & 28.7 & 34.7 & 47.3 & 33.4 \\
~~$\bullet~$EATA & 24.4 & 29.1 & 41.5 & 23.6 & 29.5 & 42.3 & 25.2 & 30.5 & 43.2 & 29.1 & 34.7 & 47.5 & 33.4 \\
~~$\bullet~$SAR & 24.0 & 28.7 & 41.3 & 23.7 & 29.6 & 42.1 & 24.8 & 30.1 & 43.1 & 29.1 & 34.5 & 47.1 & 33.2 \\
~~$\bullet~$READ & 24.2 & 28.6 & 41.8 & 23.6 & 29.3 & 42.2 & 25.0 & 30.0 & 43.2 & 28.8 & 34.7 & 47.4 & 33.2 \\
~~$\bullet~$COME & 23.4 & 28.8 & 41.0 & 21.9 & 29.0 & 41.8 & 24.4 & 30.2 & 43.2 & 28.0 & 34.3 & 46.9 & 32.7 \\
~~$\bullet~$TSA & 24.6 & 28.5 & 41.1 & 23.1 & 29.7 & 42.0 & 25.5 & 30.4 & 43.8 & 29.0 & 34.8 & 46.9 & 33.3 \\
\rowcolor{pink!30}~~$\bullet~$TCR & \underline{26.5} & \underline{30.7} & \underline{42.4} & \underline{25.3} & \underline{30.3} & \underline{42.6} & \underline{27.3} & \underline{32.1} & \underline{44.6} & \underline{29.3} & \underline{35.3} & \underline{47.6} & \underline{34.5} \\
\rowcolor{pink!30}
~~$\bullet~$REST & \textbf{27.0} & \textbf{31.2} & \textbf{42.7} & \textbf{25.9} & \textbf{31.2} & \textbf{42.8} & \textbf{27.8} & \textbf{32.5} & \textbf{44.8} & \textbf{29.6} & \textbf{35.6} & \textbf{47.7} & \textbf{34.9} \\
\hline
\end{tabular}
}
\end{table*}

\begin{table}[t]
\centering
\caption{Comparisons with state-of-the-art methods on AudioSet-O benchmark under \textbf{\textsc{OQS on both video or audio modalities}} regarding Recall@1 metric. ``N.'', ``B.'', ``D'', ``W'' denotes ``Noise'', ``Blur'', ``Digital'', ``Weather'' corruption categories, respectively.}
\label{tab: audioset-c}
\Large
\resizebox{0.85\linewidth}{!}{
\begin{tabular}{l|cccc|c|>{\columncolor{blue!8}}c}
\multicolumn{1}{c}{} & \multicolumn{4}{c}{AR} & \multicolumn{1}{c}{VR} \\
 Methods & N.+B. & N.+D. & B.+D. & N.+B.+D. & N.+W. & Avg.  \\
\hline
CAV-MAE & 5.7 & 5.3 & 9.0 & 7.9 & 5.3 & 6.6 \\
~~$\bullet~$Tent & 1.1 & 0.5 & 0.6 & 1.3 & 3.5 & 1.4 \\
~~$\bullet~$PL & 3.4 & 2.5 & 0.6 & 2.9 & 4.3 & 2.7 \\
~~$\bullet~$SHOT & 1.2 & 0.5 & 0.6 & 1.3 & 3.8 & 1.5 \\
~~$\bullet~$EATA & 4.2 & 5.2 & 5.0 & 7.1 & 5.9 & 5.5 \\
~~$\bullet~$SAR & 7.8 & 5.7 & 5.2 & 8.9 & 5.4 & 6.6 \\
~~$\bullet~$READ & 5.2 & 3.9 & 4.5 & 6.7 & 6.3 & 5.3 \\
~~$\bullet~$COME & 2.4 & 0.5 & 1.2 & 1.5 & 3.4 & 1.8 \\
~~$\bullet~$TSA & 6.8 & 4.4 & 4.3 & 5.7 & 5.8 & 5.4 \\
\rowcolor{pink!30}~~$\bullet~$TCR & \underline{10.6} & \underline{9.1} & \underline{10.1} & \underline{11.2} & \underline{8.2} & \underline{9.8} \\
\rowcolor{pink!30}
~~$\bullet~$REST & \textbf{12.6} & \textbf{10.6} & \textbf{11.5} & \textbf{12.9} & \textbf{9.5} & \textbf{11.4} \\
\hline
\end{tabular}
}
\end{table}

\begin{table*}[ht]
\centering
\caption{Comparisons with state-of-the-art methods in benchmarks with \textbf{\textsc{online distribution shift on both query and gallery sets}}. In the table, `ID" and ``OD" refer to ``In-Domain" and ``Out-Domain", respectively. ``@K" represents Recall@K metric for retrieval results.}
\label{tab: qgs}
\Large
\resizebox{0.9\linewidth}{!}{
\begin{tabular}{l|cc|cc|cc|cc|cc|c|c|>{\columncolor{blue!8}}c}
\multicolumn{1}{c}{} & \multicolumn{2}{c}{General2Flickr} & \multicolumn{2}{c}{General2COCO} & \multicolumn{2}{c}{General2Nocaps (ID)} & \multicolumn{2}{c}{General2Nocaps (OD)} & \multicolumn{2}{c}{AudioSet2VGG} & \multicolumn{1}{c}{FIQ2CIRR} & \multicolumn{1}{c}{CIRR2FIQ}   \\
Methods & TR@1 & IR@1 & TR@1 & IR@1 & TR@1 & IR@1 & TR@1 & IR@1 & AR@1 & VR@1 & CIR@1 & CIR@10 & Avg. \\
\hline
Base & 70.0 & 68.3 & 59.3 & 45.4 & 88.2 & 74.9 & 81.9 & 67.8 & 17.8 & 14.6 & 36.5 & 34.6 & 54.9 \\
~~$\bullet~$Tent & 81.9 & 68.5 & 61.7 & 41.7 & 88.5 & 75.4 & 82.7 & 68.9 & 17.4 & 17.8 & 35.9 & 32.5 & 56.1 \\
~~$\bullet~$PL   & 79.9 & 69.2 & 61.8 & 43.6 & 87.8 & 75.5 & 81.4 & 68.8 & 17.7 & 13.7 & \underline{36.6} & 33.5 & 55.8 \\
~~$\bullet~$SHOT & 85.6 & 69.6 & 64.5 & 34.3 & 88.2 & 75.4 & 82.4 & 68.6 & 17.5 & 18.9 & 35.9 & 32.4 & 56.1 \\
~~$\bullet~$EATA & 83.4 & 69.4 & 64.0 & 46.7 & 88.0 & 75.4 & 82.0 & 68.8 & 17.8 & 16.0 & 36.3 & 33.7 & 56.8 \\
~~$\bullet~$SAR  & 81.6 & 68.2 & 63.6 & 46.7 & 88.5 & 75.1 & 81.7 & 67.9 & 17.8 & 14.6 & \underline{36.6} & 33.1 & 56.3 \\
~~$\bullet~$READ & 79.5 & 70.2 & 63.5 & 44.9 & 87.5 & 75.9 & 80.7 & 68.7 & 17.8 & 16.2 & \textbf{36.7} & 34.0 & 56.3 \\
~~$\bullet~$COME & 84.2 & 67.0 & 63.7 & 26.3 & 88.7& 74.9 & 83.1 & 66.3 & 17.5 & 16.5 & 35.9 & 32.0 & 54.7 \\
~~$\bullet~$TSA  & 80.7 & 69.4 & 62.7 & 44.2 & 88.5 & 75.4 & 81.1 & 68.6 & 17.8 & 12.8 & \underline{36.6} & 34.4 & 56.0 \\
\rowcolor{pink!30}~~$\bullet~$TCR  & \underline{86.8} & \underline{70.1} & \underline{68.9} & \underline{48.1} & \underline{89.7} & \underline{76.0} & \underline{87.2} & \underline{68.9} & \underline{18.0} & \underline{19.4} & \textbf{36.7} & \underline{35.6} & \underline{58.8} \\
\rowcolor{pink!30}
~~$\bullet~$REST & \textbf{87.8} & \textbf{70.2} & \textbf{69.4} & \textbf{48.6} & \textbf{90.2} & \textbf{76.2} & \textbf{87.8} & \textbf{69.0} & \textbf{18.3} & \textbf{20.4} & \textbf{36.7} & \textbf{36.1} & \textbf{59.2} \\
\hline
\end{tabular}
}
\end{table*}

\begin{table*}[ht]
\centering
\caption{Comparisons with state-of-the-art methods on benchmarks under \textbf{\textsc{DQS}} setting.}
\label{tab: mixed}
\resizebox{0.8\linewidth}{!}{
\begin{tabular}{l|cc|cc|cc|cc|c|c|>{\columncolor{blue!8}}c}
\multicolumn{1}{c}{} & \multicolumn{2}{c}{Fashion-Gen} & \multicolumn{2}{c}{COCO-D} & \multicolumn{2}{c}{Flickr-D} & \multicolumn{2}{c}{AudioSet-D} & \multicolumn{1}{c}{CIRR-D} & \multicolumn{1}{c}{FIQ-D}  \\
Methods & TR@1 & IR@1 & TR@1 & IR@1 & TR@1 & IR@1 & AR@1 & VR@1 & CIR@1 & CIR@10 & Avg.  \\
\hline
Base        & 19.9 & 26.1 & 45.0 & 40.3 & 61.7 & 61.5 & 9.8 & 6.9 & \underline{34.1} & 31.7 & 33.7 \\
~~$\bullet~$Tent & 11.7 & 24.6 & 12.8 & 5.7  & 42.3 & 48.1 & 2.6 & 5.2 & 34.0           & 31.1 & 21.8 \\
~~$\bullet~$PL   & 13.1 & 23.7 & 19.8 & 7.6  & 47.7 & 60.7 & 7.5 & 6.6 & 33.8           & 31.4 & 25.2 \\
~~$\bullet~$SHOT & 9.8  & 23.0 & 14.4 & 5.7  & 40.2 & 49.0 & 3.0 & 2.7 & 34.0           & 31.1 & 21.3 \\
~~$\bullet~$EATA & 17.0 & 25.2 & 38.2 & 39.7 & 51.4 & 62.0 & 5.2 & 7.8 & 34.0           & 31.4 & 31.2 \\
~~$\bullet~$SAR  & 14.8 & 26.2 & 40.2 & 40.3 & 53.2 & 61.4 & 9.8 & 6.9 & 34.0           & 31.6 & 31.8 \\
~~$\bullet~$READ & 10.6 & 25.3 & 32.6 & 37.4 & 55.7 & 61.6 & 7.1 & 8.6 & 33.9           & 31.6 & 30.4 \\
~~$\bullet~$COME & 2.3  & 2.5  & 9.9  & 4.7  & 47.0 & 36.5 & 3.0 & 7.0 & 33.4           & 31.4 & 17.8 \\
~~$\bullet~$TSA  & 13.4 & 25.6 & 30.4 & 35.9 & 60.0 & 60.5 & 7.8 & 7.2 & 34.0           & 31.6 & 30.6 \\
\rowcolor{pink!30}~~$\bullet~$TCR  & \underline{21.0} & \underline{29.3} & \underline{48.6} & \underline{40.7} & \underline{67.1} & \underline{62.1} & \underline{10.9} & \underline{9.1} & 34.0 & \underline{32.1} & \underline{35.4} \\
\rowcolor{pink!30}
~~$\bullet~$REST & \textbf{23.4} & \textbf{29.9} & \textbf{51.8} & \textbf{41.1} & \textbf{70.3} & \textbf{62.9} & \textbf{12.3} & \textbf{10.0} & \textbf{34.2} & \textbf{32.4} & \textbf{36.8} \\
\hline
\end{tabular}
}
\end{table*}

\subsection{Comparisons under OQS setting}
In this section, we compare REST with nine state-of-the-art test-time domain adaptation methods, including online TTA methods~\cite{Tent,pl,shot}, robust TTA methods~\cite{come,tcr}, multi-modal TTA methods~\cite{READ,tsa} and TTA-oriented continual learning methods~\cite{EATA,SAR}.
Thanks to the formulated query prediction in Eq.~\ref{eq: prediction}, we could benchmark the aforementioned recognition-oriented methods for CMR with QS.
For fair comparisons, we select the optimal temperature (Eq.~\ref{eq: prediction}) for all the TTA baselines upon each dataset according to Fig.~\ref{fig: ablation}(a). 

For a quantified study on DQS, we first inject one synthetic corruption into the queries from the source domain, thereby obtaining the target-domain data.
Accordingly, the pre-trained model is fine-tuned on the source-domain data, and then conducts online adaptation to the corruption-injected target-domain data.
Note that, for the evaluation in video-audio and composed image retrieval, we select the most representative subtype in each corruption category and inject it into the queries, \textit{e.g.}, ``Gaussian noise'' for ``General Noise'' in image modality, ``OCR noise'' for ``Character-level noise'' in text modality, and ``Windy noise'' for ``Weather noise'' in audio modality.
See more details in Supplementary Material~\textsc{III}-A.
The results on the corruption-injected benchmarks across three CMR tasks are summarized in Tables~\ref{tab: coco-o-image}-\ref{tab: fiq-c}, where one could have the following observations and conclusions.
On the one hand, due to the inability to manipulate both query uniformity and the query-gallery alignment, most existing TTA methods achieve marginal improvements or even degrade the retrieval performance. 
In contrast, REST could restore the well-established common space inherited from the source model, thus significantly outperforming all the baselines across various pre-trained models and CMR tasks;
On the other hand, REST demonstrates greater robustness against more severe distribution shift such as ``Zoom'' in Table~\ref{tab: coco-o-image}, ``Noise + Blur'' in Table~\ref{tab: audioset-c}, and ``Blur + Char.'' in Table~\ref{tab: fiq-c}, whereas most baseline methods suffer significant performance degradation under these challenging visual/textual or composite distribution shifts. 

To further verify the effectiveness of REST against OQS, we conduct the experiments on benchmarks with real-world distribution shift.
Specifically, in the ``General2$\mathcal{D}_{T}$'' setting, we conduct online adaptation on ${D}_{T}$ using the pre-trained model, while in the ``$\mathcal{D}_{S}$2$\mathcal{D}_{T}$'' setting, the CMR model fine-tuned on $\mathcal{D}_{S}$ is employed to adapt to ${D}_{T}$ in an online manner.
From the results in Table~\ref{tab: qgs}, REST could achieve stable performance improvements, especially in the following cases:
i) as the size of the gallery set increases, existing TTA methods suffer from increasing performance degradation, \textit{e.g.}, Tent yields 7.8\% improvement in ``General2Flickr'' but drops to -1.3\% in ``General2COCO''. 
Such a phenomenon supports the claim in Section~\ref{sec: problem formulation} that the excessively large gallery size would hinder TTA from accommodating well for CMR tasks;
ii) as the out-of-domain challenge becomes more severe (from ``General2Nocaps (ID)'' to ``General2Nocaps (OD)''), most existing TTA methods fail to deliver performance gains.

\begin{figure*}[ht]
\centering
% \vspace{-0.1in}
\includegraphics[width=0.95\linewidth]{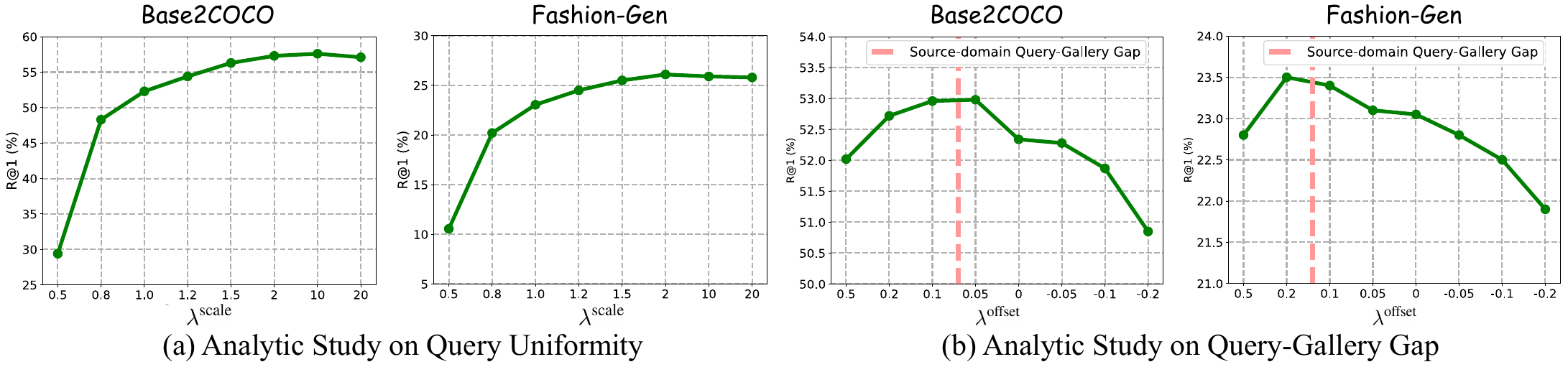}
% \vspace{-0.05in}
\caption{
Observation of the query uniformity and query-gallery gap. 
Increasing $\lambda^{\operatorname{scale}}$ indicates enlarging query uniformity, while the decreasing $\lambda^{\operatorname{offset}}$ indicates narrowing query-gallery gap.
Notably, $\lambda^{\operatorname{scale}}=1.0$ and $\lambda^{\operatorname{offset}}=0$ represent no scaling and no offset, respectively. 
}
% \vspace{-0.15in}
\label{fig: observation}
\end{figure*}

\begin{figure*}[ht]
\centering
% \vspace{-0.1in}
\includegraphics[width=0.95\linewidth]{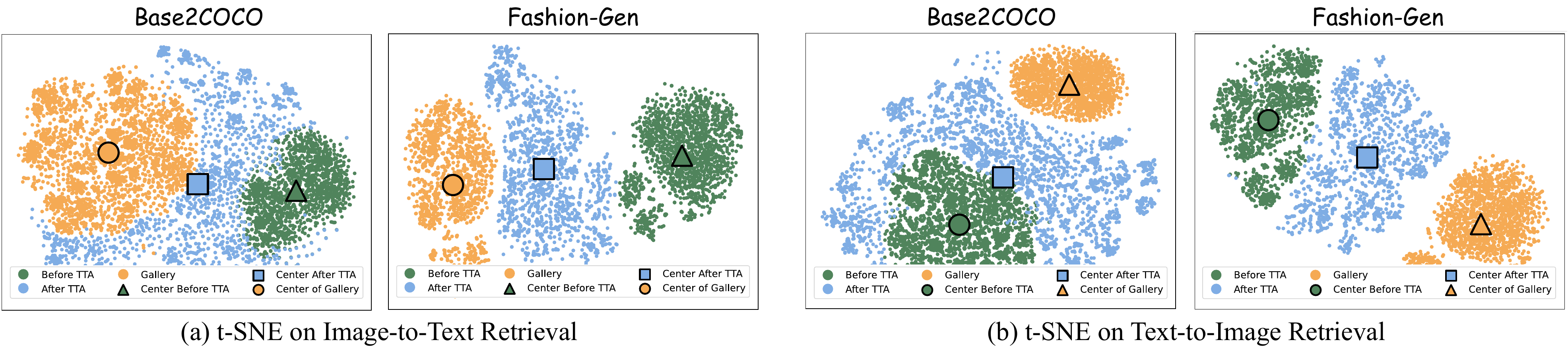}
% \vspace{-0.05in}
\caption{
t-SNE visualization of query and candidate embeddings after employing the proposed REST.
}
% \vspace{-0.15in}
\label{fig: tsne}
\end{figure*}

\subsection{Comparisons under DQS setting}
In this section, we conduct experiments to investigate the effectiveness of REST against DQS.
In the experiments, we conduct evaluations on the real-world Fashion-Gen benchmark and five corruption-injected benchmarks, including COCO-D, Flickr-D, AudioSet-D, CIRR-D, and FIQ-D, for evaluation.
Specifically, the Fashion-Gen benchmark inherently contains data from 48 product domains, \textit{e.g.}, SHIRTS and SKIRTS.
For the corruption-injected benchmarks, we randomly select one corruption or one combination of corruptions from all possible distribution shift types and then inject it into each query from the source domain, thus obtaining the target-domain data.
As a result, the queries in the data stream always originate from diverse domains with different corruptions.
Accordingly, the pre-trained model is fine-tuned on the source-domain data, and then adapts to target-domain data with diverse shift.
More specifically, for video–audio retrieval, each temporal-sequence query is divided into several segments, with one selected corruption injected into each segment.
For composed image retrieval, each multi-modal query is perturbed by one randomly selected image corruption and one randomly chosen text corruption.
Note that the gradient decoupling module is employed under the DQS setting to prevent the retrieval model from forgetting the general knowledge.

As illustrated in Table~\ref{tab: mixed}, existing methods suffer from severe performance degradation, even though most of them could achieve improvements in the OQS setting, which demonstrates that diverse query shift is more challenging.
Although some TTA-oriented continual learning methods are proposed, such as SAR and EATA, they strive for invariant parameters and thus hinder the acquisition of new knowledge, resulting in suboptimal CMR performance.
In contrast, REST employs a more plausible paradigm that self-adaptively removes gradients conflicting with general knowledge, thereby achieving consistent performance improvements.

\subsection{Parameter Analysis and Ablation Studies}
In this section, we carry out a series of parameter analysis
and ablation studies to investigate the effectiveness of REST.
Unless otherwise stated, all the experiments are conducted under the ``General2COCO'' and ``Fashion-Gen'' settings using the BLIP ViT-B/16 model.

\subsubsection{Ablation Studies of Loss Terms}
To verify the importance of each loss term, we investigate the variants of the QS-robust objective function in Table~\ref{tab: ablation_loss}, where one could have the following conclusions.
On the one hand, the proposed query uniformity loss, query-gallery gap loss, and query-gallery consistency loss could work independently and accordingly improve performance.
On the other hand, since these three losses focus on achieving robustness against query shift from different perspectives, employing all of them together would lead to the optimal CMR performance.

\begin{figure*}[t]
\centering
% \vspace{-0.1in}
\includegraphics[width=0.9\linewidth]{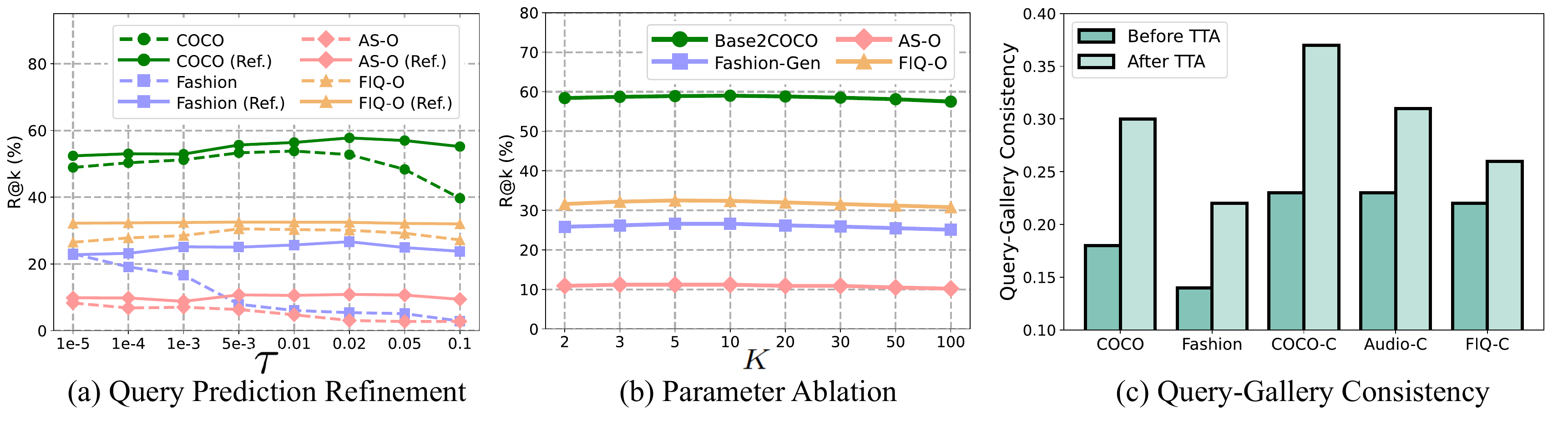}
% \vspace{-0.05in}
\caption{
Finer-grained Ablation studies. 
(a) Parameter analysis of $\tau$ on the vanilla TTA method Tent w/ (solid line) and w/o (dotted line) the query prediction refinement module.  
(b) Parameter analysis of the selected neighbors $K$ in Eq.~\ref{eq: new prediction}.
(c) Analysis of query-gallery consistency before and after employing REST.
}
% \vspace{-0.15in}
\label{fig: ablation}
\end{figure*}

\begin{table}[t]
\centering
\caption{Ablation study of the loss terms under the ``General2COCO'' setting, where $\checkmark$ denotes the module is adopted.}
% \Large
\resizebox{0.75\linewidth}{!}{
\begin{tabular}{ccccccc}
$\mathcal{L}_{U}$ & $\mathcal{L}_{C}$ & $\mathcal{L}_{G}$ & TR@1 & IR@1 & Avg. \\
\midrule
   &    &    &   59.3 & 45.4 & 52.4 \\
$\checkmark$ &    &    & 65.9 & 46.5 & 56.2 \\
   & $\checkmark$ &    & 67.2 & 47.2 & 57.2\\
   &    & $\checkmark$ & 61.9 & 46.6 & 54.3 \\
$\checkmark$ & $\checkmark$ &    & 68.2 & 47.7 & 58.0 \\
$\checkmark$ &    & $\checkmark$ & 66.7 & 47.6 & 57.2 \\
   & $\checkmark$ & $\checkmark$ & 67.8 & 47.8 & 57.8 \\
$\checkmark$ & $\checkmark$ & $\checkmark$ & 69.4 & 48.6 & 59.0 \\
\bottomrule
\end{tabular}}
\label{tab: ablation_loss}
\end{table}

\subsubsection{Analytic Study on Query Uniformity and Query-Gallery Gap}
As discussed in Introduction, the query shift would diminish the uniformity of queries and amplify the query-gallery gap.
For an in-depth understanding, we conduct analytic experiments to investigate how the two characteristics affect the retrieval performance.
Specifically, to investigate the influence of query uniformity, we manually scale the distribution of queries by moving the embeddings in common space as follows,
\begin{equation}
    (z_i^Q)^{\text {scale }}=\overline{\mathbf{Z}}^{Q}+\lambda^{\text{scale}}\left(z_i^Q-\overline{\mathbf{Z}}^{Q}\right),
    \label{eq: scale}
\end{equation}
where $\overline{\mathbf{Z}}^{Q}=\frac{1}{N^{Q}}\sum_{i=1}^{N^{Q}}\mathbf{z}_i^Q$ denotes the center of queries, $\lambda^{\text{scale}}$ indicates the scaling factor.
From the results in Fig.~\ref{fig: observation} (a), enlarging the query uniformity would boost the retrieval performance, but not vice versa. 
Such a phenomenon supports the claim that higher uniformity would guarantee the discrimination between queries and thus improve the performance.
To investigate the influence of the query-gallery gap, we manually shift the embeddings of all queries towards closing the query-gallery gap as follows,
\begin{equation}
    (z_i^Q)^{\text {offset}}=z_i^Q-\lambda^{\text {offset}} \left(\overline{\mathbf{Z}}^{Q}-\overline{\mathbf{Z}}^{G}\right),
    \label{eq: offset}
\end{equation}
where $\overline{\mathbf{Z}}^{G}=\frac{1}{N^{G}}\sum_{i=1}^{N^{G}}\mathbf{z}_i^G$ denotes the center of the candidates, $\lambda^{\text{offset}}$ indicates the offset factor.
As illustrated in Fig.~\ref{fig: observation}(b), one could observe that monotonously narrowing the query-gallery gap would not always bring the performance gains, while the estimated source-domain query-gallery gap $\Delta_{S}$ might be a plausible criterion for the rectification.
Note that the embeddings are all $\ell 2$-normalized after scaling or shifting.

To further qualitatively study the effectiveness of REST, we carry out the t-SNE visualization on both the query and gallery sets before and after the TTA process. 
As shown in Fig.~\ref{fig: tsne}, we could observe that REST could enhance the scatter of queries and eliminate the discrepancy between the query and gallery centers.
In other words, REST achieves robustness against query shift by enlarging query uniformity and narrowing the query-gallery gap.

\begin{table}[t]
\centering
\caption{Ablation Study of the gradient decoupling (GD) module. ``w/o'' and ``w'' denote without and with, respectively. The unit of ``Angle'' is degrees.}
\label{tab: gra_effect}
\Large
\resizebox{0.9\linewidth}{!}{
\begin{tabular}{lcccccc}
 & \multicolumn{3}{c}{Fashion} & \multicolumn{3}{c}{COCO-D} \\
 Methods & Angle & TR@1 & IR@1 & Angle & TR@1 & IR@1 \\
\midrule
Tent w/o GD & 120.1 & 11.7 & 24.6 & 131.4 & 12.8 & 5.7 \\
Tent w/ GD  & 86.1  & 20.8 & 27.5 & 87.9  & 46.5 & 40.4 \\
Ours w/o GD & 98.6 & 22.3 & 28.6 & 101.4 & 50.4 & 40.6 \\
Ours w/ GD  & 83.2  & 23.4 & 29.9 & 80.0  & 51.8 & 41.1 \\
\bottomrule
\end{tabular}
}
\end{table}

\subsubsection{Analytic Study on Query Prediction Refinement}
The proposed query prediction refinement module aims to prevent the model from either underfitting or overfitting by excluding irrelevant samples in the gallery set.
% formulate TTA-favorable predictions of queries. 
To verify this claim, we conduct analytic experiments under varying temperatures in Eq.~\ref{eq: prediction} and Eq.~\ref{eq: new prediction}.
As shown in Fig.~\ref{fig: ablation}(a), one could have the following conclusions:
i) selecting an appropriate temperature for TTA across different datasets is highly challenging;
ii) although a very low temperature (e.g., $1e^{-4}$) yields better performance for some datasets, the performance degrades since the model tends to overfit noisy query predictions;
iii) query prediction refinement module not only stabilizes the temperature selection for all the datasets, but also alleviates both the underfitting and overfitting problems by excluding irrelevant candidates.
In particular, REST demonstrates stable performance improvements within the range $[0.001, 0.05]$ and achieves the best performance at $\tau=0.02$.

To construct diverse negatives for Eq.~\ref{eq: loss_rem}, we employ the Top-$K$ selection on the negatives within the mini-batch.
Here, we conduct experiments to explore the impact of different values of $K$. 
The results in Fig.~\ref{fig: ablation}(b) demonstrate that an appropriate choice of $K$ (e.g., $K=10$) would enhance the diversity of negatives and thus boost retrieval performance.

\subsubsection{Analytic Study on Query-Gallery Consistency}
Beyond the efforts on manipulating query uniformity and query-gallery gap, REST could enhance the consistency between query and gallery sets, thus facilitating the associations of correct query-candidate pairs.
To validate this claim, we conduct an analytical experiment on query-gallery consistency before and after employing REST.
The results in Fig.~\ref{fig: ablation}(c) demonstrate that employing REST significantly improves the consistency of query-candidate pairs and thus boosts the retrieval performance.

\subsubsection{Analytic Study on Gradient Decoupling}
As discussed in Section~\ref{sec: Gradient Decoupling}, we propose the gradient decoupling module to eliminate the gradient conflict with the general knowledge.
Here, we conduct the experiments to investigate the effectiveness of the designed GD module.
From the results in Table~\ref{tab: gra_effect}, one could have the following conclusions.
On the one hand, the refined gradient $\boldsymbol{\hat{G}}$ would not conflict with general knowledge and thus support knowledge accumulation for better retrieval performance, which further verifies the correctness of Theorem~\ref{theorem: gradient}.
On the other hand, since the proposed GD module is method-agnostic, integrating the GD module with other TTA methods could also alleviate the general knowledge forgetting problem, further validating its effectiveness.

\section{Conclusion}
In this paper, we formally study the query shift problem in cross-modal retrieval.
By delving into the underlying negative impacts of the query shift problem, we reveal that query shift will not only disrupt the well-constructed common space inherited from the source model, but also steer the model toward acquiring domain-specific knowledge that conflicts with general knowledge.
To address the query shift problem, we develop a new test-time adaptation method, dubbed REST.
In brief, REST formulates the predictions of queries and then performs QS-robust objectives on these query predictions to preserve the well-constructed common space.
After that, REST adopts a gradient decoupling module to prevent the CMR model from forgetting the general knowledge, thus achieving harmonious adaptation.
For comprehensive evaluations, we provide 20 benchmarks featuring either synthetic corruptions or real-world distribution shifts.
In the future, we plan to explore more applications that might suffer from the query-shift problem, such as image captioning and visual question answering.

\bibliographystyle{IEEEtran}
\bibliography{reference}

\vspace{0.2in}

\renewcommand{\thefigure}{A.\arabic{figure}}
\setcounter{figure}{0}
\renewcommand{\thetable}{A.\arabic{table}}
\setcounter{table}{0}
\renewcommand{\thesection}{\Alph{section}}
\setcounter{section}{0}

\begin{leftline}
	{
		\Large{\textsc{Supplementary Materials}}
	}
\end{leftline}
% 生成附录目录
\startcontents[sections]
{
    \hypersetup{linkcolor=black}
    \printcontents[sections]{l}{1}{}
}

\section{Definitions}
In this section, we provide formal definitions of the aforementioned query uniformity, query-gallery gap, and query-gallery consistency in the manuscript, which are essential characteristics in multimodal representation learning~\cite{multi_modal_deviation,liu1,liu2,yin2}. 
In the following, we elaborate on each of them one by one.

\textbf{Query Uniformity: } the uniformity of the given queries is defined as
\begin{equation}
    \text{Uniformity}=\frac{1}{N^{Q}}\sum_{i=1}^{N^Q}\|z_{i}^{Q}-\overline{\mathbf{Z}}^{Q}\|,
    \label{eq: def uni}
\end{equation}
where $\overline{\mathbf{Z}}^{Q}=\frac{1}{N^{Q}}\sum_{i=1}^{N^Q}z_{i}^{Q}$ denotes the center of the queries.
Intuitively, a low query uniformity indicates that the queries are overly compact in the common space, making it hard for the retrieval model to distinguish between them.

\textbf{Query-Gallery Gap: } the gap between query and gallery sets is defined as
\begin{equation}
    \text{Gap}=\|\overline{\mathbf{Z}}^{Q}-\overline{\mathbf{Z}}^{G}\|,
    \label{eq: def: gap}
\end{equation}
where $\overline{\mathbf{Z}}^{G}=\frac{1}{N^{G}}\sum_{i=1}^{N^G}z_{i}^{G}$ denotes the center of the given candidates.
As illustrated in Fig. 3, either an excessively small or large query-gallery gap would disrupt the well-constructed common space, thereby degrading the retrieval performance. 
It is worth noting that the gap between modalities has been proved to be an inherent characteristic of multimodal pre-trained models in recent work ~\cite{MindGap}.
Different from prior studies on modality gaps, we further investigate the properties of the query-gallery gap, where the given queries may be either unimodal or multimodal.

\textbf{Query-Gallery Consistency: } the consistency between query and gallery sets is defined as
\begin{equation}
    \text{Consistency}=\frac{1}{\sum_{i,j}\mathcal{C}(z_i^Q,z_j^G)}\sum_{i,j}\mathcal{C}(z_i^Q,z_j^G) \operatorname{cos} (z_i^Q,z_j^G),
    \label{eq: def: consistency}
\end{equation}
where $\operatorname{cos}(\cdot,\cdot)$ denotes the cosine similarity, and $\mathcal{C}(z_i^Q,z_j^G)=1$ $i.f.f$ $(z_i^Q, z_j^G)$ denotes correctly associated pair, and $0$ otherwise.
For the retrieval task, it is desirable that queries and their corresponding candidates are highly similar in the common space, thereby the associated samples could be easily identified.

\section{Theoretical Analysis}
In this section, we provide detailed proofs of Theorem 1
and Theorem 2 in the manuscript.
\subsection{Proofs of Theorem 1}
\renewcommand{\thetheorem}{1}
\begin{theorem}
    Let $\hat{p}$ denotes the refined query prediction of the given query $x$, where $\hat{p}_{i}$ denotes the probability that $x$ is associated with the $i$-th candidate. For the entropy minimization objective $\mathcal{L}_{EM}$, the probability gradient of the easy negative $\hat{p}_{m}$ exceeds that of the hard negative $\hat{p}_{n}$ whenever $\hat{p}_{m}< \hat{p}_{n} \leq \frac{1}{e}$. 
    Mathematically, we have
    \begin{equation}
        \left|\frac{\partial \mathcal{L}_{EM}}{\partial \hat{p}_{m}}\right| > \left|\frac{\partial \mathcal{L}_{EM}}{\partial \hat{p}_{n}}\right|
        % \label{eq: theorem_entropy}
    \end{equation}
    where $|\cdot|$ denotes the absolute value.
    % \label{theorem: entropy minimization}
\end{theorem}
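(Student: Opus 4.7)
The plan is to proceed by direct computation of the partial derivatives and then compare them on the given range. First I would differentiate $\mathcal{L}_{EM} = -\sum_{i}\hat{p}_{i}\log \hat{p}_{i}$ componentwise, treating the $\hat{p}_{i}$ as independent variables (this is appropriate here because the statement concerns the partial derivatives of the entropy functional with respect to the individual probability entries, not the constrained gradient on the simplex). This yields
\begin{equation}
\frac{\partial \mathcal{L}_{EM}}{\partial \hat{p}_{i}} = -\bigl(\log \hat{p}_{i} + 1\bigr),
\end{equation}
so the problem reduces to comparing $|\log \hat{p}_{m} + 1|$ with $|\log \hat{p}_{n} + 1|$.

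Next I would use the hypothesis $\hat{p}_{n} \leq 1/e$, which implies $\log \hat{p}_{n} \leq -1$, hence $\log \hat{p}_{n} + 1 \leq 0$, and the same holds a fortiori for $\hat{p}_{m} < \hat{p}_{n}$. Therefore the absolute values can be stripped as
\begin{equation}
\left|\frac{\partial \mathcal{L}_{EM}}{\partial \hat{p}_{i}}\right| = -\log \hat{p}_{i} - 1 \qquad (i \in \{m,n\}).
\end{equation}
The map $p \mapsto -\log p - 1$ is strictly decreasing on $(0, 1]$, so $\hat{p}_{m} < \hat{p}_{n}$ immediately gives $-\log \hat{p}_{m} - 1 > -\log \hat{p}_{n} - 1$, which is the desired inequality.

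There is essentially no main obstacle: once the partial derivative is written down, the claim is an elementary monotonicity statement about $-\log(\cdot)$. The only subtlety worth flagging explicitly in the write-up is why the condition $\hat{p}_{n} \leq 1/e$ is needed at all; without it the quantities $\log \hat{p}_{i} + 1$ could have mixed signs and the absolute-value comparison would no longer follow from the monotonicity of $\log$ alone. I would conclude with a short remark justifying the interpretation that in CMR the gallery is large, so most negatives satisfy $\hat{p}_{i} \ll 1/e$, which is exactly the regime the theorem is about, thereby motivating the introduction of the hard-mining loss $\mathcal{L}_{RHM}$ in Section~IV-E.
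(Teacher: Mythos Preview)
Your proposal is correct and follows essentially the same route as the paper: compute the partial derivative $-(\log \hat p_i + 1)$, use $\hat p_i \le 1/e$ to drop the absolute value, and conclude by monotonicity. The only cosmetic difference is that the paper establishes monotonicity via the second derivative $\partial^2 \mathcal{L}_{EM}/\partial \hat p_i^2 = -1/\hat p_i < 0$, whereas you appeal directly to the strict monotonicity of $p \mapsto -\log p - 1$; these are equivalent.
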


\begin{proof}
According to the definition of Shannon entropy~\cite{s_entropy}, \textit{i.e.}, $\mathcal{L}_{EM}=-\sum_{i}\hat{p}_i\log\hat{p}_i$, the first-order derivative of $\mathcal{L}_{EM}$ with respect to $\hat{p}_i$ is 
\begin{equation}
    \frac{\partial \mathcal{L}_{\mathrm{EM}}}{\partial \hat{p}_{i}}
    = -\bigl(\log \hat{p}_{i} + 1\bigr).
    \label{eq: theorem_1_1}
\end{equation}
Next, the second-order derivative could be derived as
\begin{equation}
    \frac{\partial^2 \mathcal{L}_{\mathrm{EM}}}{\partial \hat{p}_{i}^2}
    = -\frac{1}{\hat{p}_{i}} < 0,
\end{equation}
which implies that $\frac{\partial \mathcal{L}_{\mathrm{EM}}}{\partial \hat{p}_{i}}$ monotonically decreasing.

In this case, according to Eq.~\ref{eq: theorem_1_1}, the first-order derivative is positive when $\hat{p}_{i} < 1/e$.
Consequently, for all $\hat{p}_{i}\in(0,1/e]$, we have 
\begin{equation}
    \left|\frac{\partial \mathcal{L}_{\mathrm{EM}}}{\partial \hat{p}_{i}}\right|
    = \frac{\partial \mathcal{L}_{\mathrm{EM}}}{\partial \hat{p}_{i}}.
\end{equation}
As the first-order derivative is strictly decreasing, for any $\hat{p}_{m}< \hat{p}_{n} \le \tfrac{1}{e}$, it follows that
\begin{equation}
\left|\frac{\partial \mathcal{L}_{\mathrm{EM}}}{\partial \hat{p}_{m}}\right|
>
\left|\frac{\partial \mathcal{L}_{\mathrm{EM}}}{\partial \hat{p}_{n}}\right|,
\end{equation}
which proves the theorem.
\end{proof}

\subsection{Proofs of Theorem 2}
\renewcommand{\thetheorem}{2}
\begin{theorem}
The refined gradient $\boldsymbol{\hat{G}}$ never conflicts with the retrieval-specific direction $\boldsymbol{G}_r$, \textit{i.e.}, $\boldsymbol{\hat{G}}\cdot \boldsymbol{G}_{r} \geq 0$
% \label{theorem: gradient}
\end{theorem}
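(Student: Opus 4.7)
The plan is to prove the statement by direct case analysis on the sign of $\boldsymbol{G}_d \cdot \boldsymbol{G}_r$, exploiting the orthogonal decomposition $\boldsymbol{G}_d = \boldsymbol{G}_\perp + \boldsymbol{G}_\|$ used to construct $\boldsymbol{\hat{G}}$. The two pieces of structure I would rely on are: (i) by construction, $\boldsymbol{G}_\perp$ is the component of $\boldsymbol{G}_d$ orthogonal to $\boldsymbol{G}_r$, so $\boldsymbol{G}_\perp \cdot \boldsymbol{G}_r = 0$; and (ii) the self-adaptive weight $W_d = \sigma(D_{KL}) = \exp(-D_{KL})$ is strictly positive, so it never flips any sign.

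First I would handle the case $\boldsymbol{G}_d \cdot \boldsymbol{G}_r \geq 0$. Here the refined gradient is $\boldsymbol{\hat{G}} = W_d\,\boldsymbol{G}_\perp + W_d\,\boldsymbol{G}_\| = W_d\,\boldsymbol{G}_d$. Taking the inner product with $\boldsymbol{G}_r$ gives $\boldsymbol{\hat{G}} \cdot \boldsymbol{G}_r = W_d\,(\boldsymbol{G}_d \cdot \boldsymbol{G}_r) \geq 0$, since $W_d > 0$ and the case assumption makes the second factor nonnegative. Next, for the case $\boldsymbol{G}_d \cdot \boldsymbol{G}_r < 0$, the construction discards the parallel piece and sets $\boldsymbol{\hat{G}} = W_d\,\boldsymbol{G}_\perp$. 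Then $\boldsymbol{\hat{G}} \cdot \boldsymbol{G}_r = W_d\,(\boldsymbol{G}_\perp \cdot \boldsymbol{G}_r) = 0$ by the orthogonality of the decomposition. Combining the two cases yields $\boldsymbol{\hat{G}} \cdot \boldsymbol{G}_r \geq 0$ unconditionally, which is the claim.

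There is essentially no hard step in this argument: the entire proof is a one-line consequence of how $\boldsymbol{\hat{G}}$ is defined piecewise so as to zero out precisely the offending parallel component whenever it would create a negative inner product with $\boldsymbol{G}_r$. The only subtlety worth flagging is making explicit that $W_d = \exp(-D_{KL}) > 0$ so that scaling by $W_d$ preserves sign, and stating that the decomposition $\boldsymbol{G}_d = \boldsymbol{G}_\perp + \boldsymbol{G}_\|$ is well-defined whenever $\boldsymbol{G}_r \neq 0$ (with the degenerate case $\boldsymbol{G}_r = 0$ trivially giving $\boldsymbol{\hat{G}} \cdot \boldsymbol{G}_r = 0$). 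With these remarks in place, the proof is complete.
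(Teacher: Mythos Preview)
Your proof is correct and follows essentially the same case analysis as the paper's own proof: both split on the sign of $\boldsymbol{G}_d \cdot \boldsymbol{G}_r$, use $\boldsymbol{G}_\perp \cdot \boldsymbol{G}_r = 0$, and conclude $\boldsymbol{\hat{G}} \cdot \boldsymbol{G}_r \geq 0$ in each branch. The only cosmetic difference is that in the first case you write $\boldsymbol{\hat{G}} = W_d\,\boldsymbol{G}_d$ directly, whereas the paper keeps the $\boldsymbol{G}_\perp$ and $\boldsymbol{G}_\|$ pieces separate before dropping the orthogonal term; your added remarks on $W_d > 0$ and the degenerate case $\boldsymbol{G}_r = 0$ are welcome extra care but not a different route.
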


\begin{proof}
In the following, we elaborate on the two possible cases for the angle between $G_d$ and $G_r$.

\textit{Case 1:} When $\boldsymbol{G}_d \cdot \boldsymbol{G}_r \geq 0$, we have $\boldsymbol{G}_{\|} \cdot \boldsymbol{G}_r\geq 0$.
Accordingly, we could derive $\boldsymbol{\hat{G}} \cdot \boldsymbol{G}_r $ as follows,
\begin{equation}
    \begin{aligned}
        \boldsymbol{\hat{G}} \cdot \boldsymbol{G}_r &= (W_d \ \boldsymbol{G}_{\perp} + W_d \ \boldsymbol{G}_{\|})\cdot \boldsymbol{G}_r\\
        &=W_d \ \boldsymbol{G}_{\perp} \cdot \boldsymbol{G}_r + W_d \ \boldsymbol{G}_{\|} \cdot \boldsymbol{G}_r\\
        &=W_d \ \boldsymbol{G}_{\|} \cdot \boldsymbol{G}_r \geq 0.
    \end{aligned}
\end{equation}
where $\boldsymbol{G}_{\perp}$ is orthogonal to $\boldsymbol{G}_r$ so that $\ \boldsymbol{G}_{\perp} \cdot \boldsymbol{G}_r=0$.

\textit{Case 2:} When $\boldsymbol{G}_d \cdot \boldsymbol{G}_r < 0$, we have $\boldsymbol{\hat{G}} = W_d \ \boldsymbol{G}_{\perp}$. In this case, we could derive $\boldsymbol{\hat{G}} \cdot \boldsymbol{G}_r $ as follows,
\begin{equation}
    \boldsymbol{\hat{G}} \cdot \boldsymbol{G}_r = W_d \ \boldsymbol{G}_{\perp} \cdot \boldsymbol{G}_r = 0.
\end{equation}

As a result, we could conclude that $\boldsymbol{\hat{G}} \cdot \boldsymbol{G}_r \geq 0$, which demonstrates that the refined gradient $\boldsymbol{\hat{G}}$ would never conflict with the retrieval-specific direction $\boldsymbol{G}_r$.
\end{proof}

\begin{table*}[t]
\centering
\caption{Corruptions in different modalities. The \textbf{bold} entries indicate the most representative subtypes within each corruption category.}
\label{tab: corruptions}
\resizebox{0.85\linewidth}{!}{
\begin{tabular}{lp{11cm}}
\toprule
\textbf{Modality} & \textbf{Category: Subtypes} \\
\midrule
Image &
i) Noise: \textbf{Gaussian noise}, Impulse noise, Shot noise, Speckle noise; \newline
ii) Blur: Defocus blur, Glass blur, \textbf{Motion blur}, Zoom blur; \newline
iii) Weather: Brightness, Frost, \textbf{Fog}, Snow; \newline
iv) Digital: Contrast, Elastic, \textbf{JPEG compression}, Pixelate. \\
\midrule
Text &
i) Character-level: \textbf{OCR}, Character Delete (CD), Character Insert (CI), Character Replace (CR), Character Swap (CS); \newline
ii) Word-level: Insert Punctuation (IP), Word Deletion (WD), Word Insertion (WR), Word Swap (WS), \textbf{Synonym Replacement (SR)}; \newline
iii) Sentence-level: Active, Casual, Formal, Passive, \textbf{Backtranslation}. \\
\midrule
Video &
i) Noise: \textbf{Gaussian noise}, Impulse noise, Shot noise, Speckle noise; \newline
ii) Blur: Defocus blur, Glass blur, \textbf{Motion blur}, Zoom blur; \newline
iii) Digital: Contrast, Elastic, \textbf{JPEG compression}, Pixelate. \\
\midrule
Audio &
i) Noise: Gaussian noise, Crowd noise, \textbf{Traffic noise}; \newline
ii) Weather: Rainy noise, Thunder noise, \textbf{Windy noise}; \\
\bottomrule
\end{tabular}
}
\end{table*}

\vspace{-0.1in}

\begin{table*}[t]
\centering
\caption{Statistics of the used datasets in this work.}
\label{tab: benchmarks}
\resizebox{0.8\linewidth}{!}{
\begin{tabular}{lcccc}
\toprule
\textbf{Dataset} & \textbf{Type} & \textbf{Scale} & \textbf{Relation} \\
\midrule
COCO~\cite{COCO} & Image-Text & 5,000 images / 25,000 captions & 1-to-5 \\
Flickr~\cite{Flickr} & Image-Text & 1,000 images / 5,000 captions & 1-to-5 \\
Nocaps~\cite{Nocaps} & Image-Text & 1,562 images / 15,620 captions & 1-to-10 \\
Fashion-Gen~\cite{Fashion-Gen} & Image-Text & 32,528 images / 32,528 captions & 1-to-1 \\
RSICD~\cite{RSICD} & Image-Text & 1,000 images / 5,000 captions & 1-to-5 \\
RSITMD~\cite{RSITMD} & Image-Text & 450 images / 2,250 captions & 1-to-5 \\
AudioSet~\cite{audioset} & Video-Audio & 3,000 videos / 3,000 audios & 1-to-1 \\
VGGSound~\cite{vggsound} & Video-Audio & 3,000 videos / 3,000 audios & 1-to-1 \\
FIQ~\cite{FashionIQ} & Composed Image Retrieval & 6,016 references \& modifications / images & (1+1)-to-1 \\
CIRR~\cite{cirr} & Composed Image Retrieval & 4,148 references \& modifications / images & (1+1)-to-1 \\
\bottomrule
\end{tabular}
}
\vspace{-0.1in}
\end{table*}

\section{More Implementation Details}
In this section, we will provide more details about the benchmarks and the experimental setting.

\subsection{More Details about the Benchmarks}
In the manuscript, we conduct the experiments under various settings, \textit{i.e.}, online query shift and diverse query shift.
Here, we provide more details about the benchmarks used in the two different settings. 
Specifically, we first introduce the datasets employed in Table~\ref{tab: benchmarks} and then we elaborate on the constructed benchmarks as follows.

\begin{table*}[htbp]
    \vspace{-0.2in}
    \caption{Comparisons with state-of-the-art methods on COCO-O benchmark under \textbf{\textsc{OQS on the image modality}} regarding the Recall@1 metric. The best results are marked in \textbf{bold} and the second best results are \underline{underlined}.}
    \label{tab: coco-o-image-large}
 \begin{center}
 \vspace{-0.1in}
 \begin{threeparttable}
    \Large
    \resizebox{0.98\linewidth}{!}{
 	\begin{tabular}{l|cccc|cccc|cccc|cccc|>{\columncolor{blue!8}}c}
 	\multicolumn{1}{c}{} & \multicolumn{4}{c}{Noise} & \multicolumn{4}{c}{Blur} & \multicolumn{4}{c}{Weather} & \multicolumn{4}{c}{Digital}  \\
 	 Methods & Gauss. & Shot & Impul. & Speckle & Defoc. & Glass & Motion & Zoom & Snow & Frost & Fog & Brit. & Contr. & Elastic & Pixel & JPEG & Avg.  \\
    \cmidrule{1-18}
        BLIP ViT-L/16 & 50.3 & 51.8 & 51.1 & 61.6 & 53.7 & 72.1 & 49.4 & 14.5 & 44.0 & 57.5 & 61.8 & 70.5 & 37.3 & 50.6 & 32.0 & 70.5 & 51.8 \\ 
        ~~$\bullet~$Tent & 46.3 & 49.3 & 46.7 & 58.4 & 52.2 & 71.8 & 47.5 & 12.3 & 41.9 & 56.2 & 60.9 & 69.7 & 35.7 & 48.3 & 29.4 & 69.6 & 49.8 \\ 
        ~~$\bullet~$PL & 35.4 & 41.8 & 42.8 & 62.0 & 34.7 & 73.2 & 23.3 & 3.5 & 40.8 & 56.3 & 57.4 & 71.5 & 17.4 & 56.1 & 31.6 & 70.2 & 44.9 \\
        ~~$\bullet~$SHOT & 33.5 & 37.5 & 47.7 & 64.6 & 24.9 & 73.7 & 17.6 & 1.5 & 37.5 & 56.7 & 55.1 & \underline{72.7} & 25.5 & 53.1 & 11.3 & \underline{70.7} & 42.7 \\
        ~~$\bullet~$EATA & 46.2 & 53.5 & 49.5 & 63.8 & 56.5 & 73.8 & 52.6 & 18.4 & 50.6 & 59.1 & 64.5 & 72.1 & 40.7 & 55.4 & 43.5 & \underline{70.7} & 54.4  \\
        ~~$\bullet~$SAR & 45.9 & 50.2 & 47.3 & 63.1 & 51.1 & 73.8 & 47.2 & 11.6 & 40.8 & 58.9 & 60.7 & 71.6 & 33.6 & 54.0 & 34.4 & 70.5 & 50.9  \\
        ~~$\bullet~$READ & 38.1 & 48.0 & 43.3 & 63.5 & 43.6 & 73.4 & 43.6 & 22.0 & 44.5 & 56.5 & 62.2 & 71.9 & 32.9 & 49.6 & 27.5 & 70.6 & 49.5  \\
        ~~$\bullet~$COME & 23.8 & 21.2 & 22.7 & 53.3 & 14.3 & 73.2 & 11.3 & 1.7 & 26.6 & 50.9 & 46.6 & 72.2 & 21.3 & 51.1 & 10.2 & 69.9 & 35.7 \\
        ~~$\bullet~$TSA & 41.0 & 47.7 & 46.2 & 61.0 & 49.1 & 73.9 & 42.7 & 22.1 & 40.6 & 58.5 & 60.3 & 71.5 & 32.0 & 53.7 & 32.2 & 70.6 & 50.2 \\
        \rowcolor{pink!30}~~$\bullet~$TCR & \textbf{58.2} & \underline{60.5} & \textbf{58.9} & \underline{66.6} & \underline{60.0} & \underline{74.5} & \underline{61.3} & \underline{39.3} & \underline{58.8} & \underline{65.2} & \underline{71.7} & 72.6 & \underline{56.6} & \underline{68.0} & \underline{48.7} & 70.3 & \underline{62.0} \\
        \rowcolor{pink!30}~~$\bullet~$REST & \underline{57.5} & \textbf{61.4} & \underline{58.7} & \textbf{67.0} & \textbf{62.2} & \textbf{74.9} & \textbf{62.1} & \textbf{40.5} & \textbf{62.5} & \textbf{65.4} & \textbf{71.9} & \textbf{73.1} & \textbf{57.0} & \textbf{69.0} & \textbf{54.9} & \textbf{70.8} & \textbf{63.1} \\
    \cmidrule{1-18}
	\end{tabular}
	}
	 \end{threeparttable}
	 \end{center}
\end{table*}

\begin{table*}[htbp]
    \vspace{-0.2in}
    \caption{Comparisons with state-of-the-art methods on COCO-O benchmark under \textbf{\textsc{OQS on the text modality}} regarding the Recall@1 metric. The best results are marked in \textbf{bold} and the second best results are \underline{underlined}.}
    \label{tab: coco-o-text-large}
\newcommand{\tabincell}[2]{\begin{tabular}{@{}#1@{}}#2\end{tabular}}
\vspace{-0.1in}
 \begin{center}
 \begin{threeparttable}
    \Large
    \resizebox{0.9\linewidth}{!}{
 	\begin{tabular}{l|ccccc|ccccc|ccccc|>{\columncolor{blue!8}}c}
 	\multicolumn{1}{c}{} & \multicolumn{5}{c}{Character-level} & \multicolumn{5}{c}{Word-level} & \multicolumn{5}{c}{Sentence-level}  \\
 	Methods & OCR & CI & CR & CS & CD & SR & RI & RS & RD & IP & Formal & Casual & Passive & Active & Backtrans & Avg.  \\
    \cmidrule{1-17}
        BLIP ViT-B/16 & 31.4 & 11.3 & 9.4 & 18.9 & 11.4 & 43.6 & 51.5 & 50.3 & 50.6 & 56.8 & 56.6 & 56.2 & 54.9 & 56.8 & 54.2 & 40.9 \\ 
        ~~$\bullet~$Tent & 23.0 & 2.4 & 2.3 & 7.0 & 2.1 & 43.9 & 52.5 & 51.2 & 50.5 & 56.6 & 57.0 & 56.6 & 55.6 & 57.3 & 54.4 & 38.2 \\ 
        ~~$\bullet~$PL & 33.0 & 3.6 & 4.5 & 18.1 & 7.9 & 44.9 & 53.3 & 51.8 & 51.3 & \underline{57.0} & 57.0 & 56.6 & 55.8 & 57.3 & 54.5 & 40.5 \\ 
        ~~$\bullet~$SHOT & 22.9 & 2.4 & 2.4 & 6.5 & 2.1 & 43.9 & 52.6 & 51.1 & 50.5 & 56.6 & 56.9 & 56.6 & 55.6 & 57.3 & 54.4 & 38.1 \\ 
        ~~$\bullet~$EATA & 33.0 & 11.7 & 10.3 & 18.4 & 11.8 & 44.9 & 53.2 & 51.7 & \underline{51.2} & 56.9 & 57.0 & 56.7 & \underline{56.0} & \underline{57.4} & 54.5 & 41.7 \\ 
        ~~$\bullet~$SAR & 31.8 & 11.5 & 9.8 & 18.5 & 11.6 & 43.7 & 51.5 & 50.3 & 50.6 & 56.8 & 56.5 & 56.2 & 55.0 & 56.8 & 54.2 & 41.0 \\ 
        ~~$\bullet~$READ & 31.7 & 10.8 & 9.6 & 18.1 & 11.1 & 44.1 & 53.0 & 51.5 & 51.0 & 56.9 & \underline{57.1} & 56.7 & \underline{56.0} & 57.1 & 54.5 & 41.3 \\ 
        ~~$\bullet~$COME & 18.8 & 1.5 & 1.4 & 5.6 & 2.4 & 43.6 & 51.5 & 50.7 & 50.6 & 56.6 & 57.0 & 56.7 & 55.5 & \underline{57.4} & \underline{54.6} & 37.6 \\ 
        ~~$\bullet~$TSA & 30.9 & 10.2 & 9.1 & 16.8 & 11.1 & 43.9 & 51.9 & 51.0 & 50.9 & \underline{57.0} & 57.0 & \underline{56.8} & 55.7 & 57.3 & 54.5 & 40.9 \\ 
        \rowcolor{pink!30}~~$\bullet~$TCR & \underline{34.0} & \underline{13.8} & \underline{11.8} & \underline{19.5} & \underline{13.1} & \underline{45.1} & \underline{53.4} & \underline{52.0} & \underline{51.2} & \underline{57.0} & \underline{57.1} & \underline{56.8} & 55.8 & \underline{57.4} & \underline{54.6} & \underline{42.2} \\ 
        \rowcolor{pink!30}
        ~~$\bullet~$Ours & \textbf{34.5} & \textbf{14.1} & \textbf{12.1} & \textbf{19.8} & \textbf{13.2} & \textbf{45.5} & \textbf{53.9} & \textbf{52.3} & \textbf{51.4} & \textbf{57.3} & \textbf{57.4} & \textbf{57.0} & \textbf{56.2} & \textbf{57.6} & \textbf{54.7} & \textbf{42.5} \\
    \cmidrule{1-17}
        BLIP ViT-L/16 & 34.5 & 12.3 & 11.1 & 19.7 & 12.9 & 46.0 & 54.4 & 54.0 & 53.5 & 59.3 & 59.1 & 58.8 & 57.8 & 59.4 & 56.7 & 43.3 \\ 
        ~~$\bullet~$Tent & 34.0 & 12.3 & 11.0 & 19.6 & 12.9 & 46.5 & 54.2 & 53.8 & 53.4 & 59.3 & 59.1 & 58.8 & 57.6 & 58.9 & 56.5 & 43.2 \\ 
        ~~$\bullet~$PL & 34.6 & 3.3 & 2.3 & 14.9 & 4.9 & 47.0 & 55.0 & 54.5 & 52.9 & \textbf{59.4} & \textbf{59.5} & 58.9 & 58.1 & \underline{59.6} & 56.8 & 41.4 \\
        ~~$\bullet~$SHOT & 33.8 & 1.5 & 1.2 & 5.4 & 2.6 & 46.4 & 54.5 & 53.6 & 52.3 & 58.2 & 58.9 & 58.7 & 57.8 & 59.2 & 56.4 & 40.0 \\
        ~~$\bullet~$EATA & 35.6 & 13.3 & 11.3 & 20.3 & 13.2 & \underline{47.2} & 55.4 & 54.2 & \underline{53.6} & 59.2 & 59.1 & \underline{59.0} & 57.9 & 59.4 & 56.8 & 43.7 \\
        ~~$\bullet~$SAR & 34.5 & 13.1 & 11.2 & 20.3 & 13.1 & 46.7 & 54.4 & 54.0 & 53.5 & \textbf{59.4} & 59.1 & 58.8 & 57.8 & 59.4 & 56.7 & 43.5 \\
        ~~$\bullet~$READ & 35.3 & 12.2 & 10.9 & 19.1 & 12.7 & 47.3 & 55.1 & \underline{54.6} & 53.3 & \textbf{59.4} & 59.3 & \textbf{59.1} &58.1 & 59.6 & 56.7 & 43.5 \\
        ~~$\bullet~$COME & 33.3 & 2.7 & 2.2 & 6.8 & 3.6 & 46.4 & 54.6 & 53.3 & 51.9 & 57.9 & 58.9 & 58.7 & 57.7 & 59.2 & 56.3 & 40.2 \\
        ~~$\bullet~$TSA & 33.9 & 11.7 & 10.2 & 17.8 & 11.9 & 46.6 & 54.1 & 53.5 & 52.4 & 59.2 & 59.0 & 58.9 & 58.1 & 59.5 & 56.6 & 42.9 \\
        \rowcolor{pink!30}~~$\bullet~$TCR & \underline{36.5} & \underline{14.6} & \underline{13.2} & \underline{21.2} & \underline{14.3} & \textbf{47.8} & \underline{56.1} & 54.4 & \textbf{53.7} & \textbf{59.4} & \underline{59.4} & \underline{59.0} & \underline{58.2} & \underline{59.6} & \underline{56.9} & \underline{44.3} \\
        \rowcolor{pink!30}
        ~~$\bullet~$Ours & \textbf{36.6} & \textbf{14.8} & \textbf{13.3} & \textbf{21.3} & \textbf{14.5} & \textbf{47.8} & \textbf{56.5} & \textbf{55.2} & \underline{53.6} & \underline{59.3} & \textbf{59.5} & \textbf{59.1} & \textbf{58.3} & \textbf{59.7} & \textbf{57.0} & \textbf{44.4} \\
    \cmidrule{1-17}
	\end{tabular}
	}
	 \end{threeparttable}
	 \end{center}
\vspace{-0.3in}
\end{table*}

\begin{table}[htbp]
\centering
\caption{Comparisons with state-of-the-art methods on the RSICD and RSITMD benchmarks with \textbf{\textsc{online distribution shift on both query and gallery sets}}.}
\label{tab: remote sensing}
\vspace{-0.1in}
\Large
\resizebox{0.75\linewidth}{!}{
\begin{tabular}{l|cc|cc|>{\columncolor{blue!8}}c}
\multicolumn{1}{c}{} & \multicolumn{2}{c}{GeneralRSICD} & \multicolumn{2}{c}{GeneralRSITMD}   \\
Methods & TR@1 & IR@1 & TR@1 & IR@1 & Avg. \\
\hline
Base  & 6.4 & 6.8 & 7.6 & 10.4 & 7.8 \\
~~$\bullet~$TENT & 5.6 & 6.3 & 6.7 & 9.6  & 7.0 \\
~~$\bullet~$PL   & 5.0 & 6.3 & 6.7 & 10.1 & 7.0 \\
~~$\bullet~$SHOT & 5.5 & 6.3 & 6.7 & 9.7  & 7.0 \\
~~$\bullet~$EATA & 6.3 & 6.4 & 8.0 & 9.9  & 7.7 \\
~~$\bullet~$SAR  & 6.3 & 6.1 & 7.9 & 9.9  & 7.6 \\
~~$\bullet~$READ & 3.7 & 6.0 & 7.6 & 10.4 & 6.9 \\
~~$\bullet~$COME & 4.0 & 6.2 & 7.0 & 9.2  & 6.6 \\
~~$\bullet~$TSA  & 5.8 & 6.9 & 8.0 & 10.4 & 7.8 \\
\rowcolor{pink!30}~~$\bullet~$TCR  & 8.0 & 7.0 & 9.1 & 10.6 & 8.7 \\
\rowcolor{pink!30}
~~$\bullet~$Ours & \textbf{8.4} & \textbf{7.1} & \textbf{9.8} & \textbf{10.8} & \textbf{9.0} \\
\hline
\end{tabular}
}
\vspace{-0.1in}
\end{table}

\textbf{Online Query Shift:} 
we construct five benchmarks based on the five widely used datasets and accordingly obtain Flickr-O, COCO-O, AudioSet-O, FIQ-O, and CIRR-O benchmarks.

As shown in Table~\ref{tab: corruptions}, for the image-text retrieval, we introduce 16 and 15 types of corruption to the image and text modality, respectively.
For the video-audio retrieval, we introduce representative corruptions from three categories (\textit{i.e.}, Noise, Blur, and Digital) for the video modality and two categories (\textit{i.e.}, Noise and Weather) for the audio modality.
Note that for multi-distribution shifts, \textit{e.g.}, ``Noise+Blur'' or ``Noise+Blur+Digital'', the temporal query is split into $k$ segments if $k$ corruption types are used, and each segment is injected by a different corruption.
For composed image retrieval, we introduce representative corruptions from four categories for the image modality (\textit{i.e.}, Noise, Blur, Weather, and Digital) and three categories for the text modality (\textit{i.e.}, character-, word-, and sentence-level).

Moreover, we employ real-world benchmarks with distribution shifts on both query and gallery sets, including COCO, Flickr, Nocaps, VGGSound, FIQ, and CIRR dataset. 
Specifically, we adopt the following two methods for evaluation on real-world benchmarks.
General$\mathcal{D}_{T}$: the pre-trained model (\textit{e.g.}, BLIP) directly adapt to the target-domain data $\mathcal{D}_{T}$.
$\mathcal{D}_{S}$2$\mathcal{D}_{T}$: the model is first fine-tuned on the source-domain data $\mathcal{D}_{S}$ and then adapt to the target-domain data $\mathcal{D}_{T}$.

\textbf{Diverse Query Shift:} we employ the Fashion-Gen benchmark and five corruption-injected benchmarks to simulate queries originating from diverse domains.
Specifically, the real-world Fashion-Gen benchmark covers 48 user demands, \textit{e.g.}, tops, pants, and sneakers, which could be naturally treated as diverse domains.
Besides, we construct five corruption-injected benchmarks based on Flickr, COCO, AudioSet, FIQ, and CIRR datasets.
For the corruption-injected benchmarks, we randomly select a corruption or a combination of corruptions from all possible distribution shift types and inject it into each query from the source domain, thus obtaining target domain data with the same size as $\mathcal{D}_{S}$. As a result, each query might encounter different corruptions, \textit{i.e.}, diverse distribution shift, thereby obtaining Flickr-D, COCO-D, AudioSet-D, FIQ-D, and CIRR-D benchmarks.

\subsection{More Details about the Experimental Setting}
To guarantee the performance of the baselines, we carefully design the temperature in Eq.~1 for each dataset and adopt the optimal value for the TTA baselines. According to Fig.~5(a), the temperature is fixed at 0.001 for Fashion-Gen and 0.01 for all other benchmarks. 
It is worth noting that Tent is employed to mine the optimal temperature in Fig.~5(a), as most existing TTA methods are variants of Tent.

\section{More Experiments}
In this section, we provide additional experimental results on the larger pre-trained model, more benchmarks, single-stream retrieval model, and efficiency analysis.

\begin{table*}[!t]
    % \vspace{-0.2in}
    \caption{
    Comparisons with state-of-the-art methods on Flickr-O benchmark under \textbf{\textsc{OQS on the image modality}} regarding the Recall@1 metric. 
    }
    \label{tab: flickr-c-image}
 \begin{center}
 \begin{threeparttable}
    \large
    \resizebox{0.98\linewidth}{!}{
 	\begin{tabular}{l|cccc|cccc|cccc|cccc|>{\columncolor{blue!8}}c}
 	\multicolumn{1}{c}{} & \multicolumn{4}{c}{Noise} & \multicolumn{4}{c}{Blur} & \multicolumn{4}{c}{Weather} & \multicolumn{4}{c}{Digital}  \\
 	Query Shift & Gauss. & Shot & Impul. & Speckle & Defoc. & Glass & Motion & Zoom & Snow & Frost & Fog & Brit. & Contr. & Elastic & Pixel & JPEG & Avg.  \\
    \cmidrule{1-18}
        BLIP ViT-B/16 
        & 49.8 & 56.6 & 50.3 & 71.6 & 53.1 & 84.5 & 47.4 & 15.5 & 66.4 & 80.4 & 79.5 & 85.5 & 60.6 & 53.3 & 35.1 & 80.3 & 60.6\\
        ~~$\bullet~$Tent 
        & 54.9 & 54.9 & 54.3 & 73.1 & 53.3 & 85.3 & 47.9 & 1.6 & 67.2 & 80.9 & 79.6 & 86.8 & 63.6 & 53.4 & 35.4 & 81.4 & 60.9\\
        ~~$\bullet~$PL 
        & 51.2 & 60.3 & 49.2 & 73.7 & 29.9 & 86.2 & 28.9 & 2.3 & 69.7 & 83.1 & 81.9 & 87.4 & 66.1 & 57.8 & 38.4 & 81.4 & 59.2\\
        ~~$\bullet~$SHOT 
        & 54.0 & 63.3 & 56.2 & 76.6 & 39.0 & 87.8 & 21.7 & 1.5 & 72.6 & 84.9 & 84.0 & 88.9 & 70.6 & 63.1 & 32.9 & 82.6 & 61.2\\
        ~~$\bullet~$EATA 
        & 55.5 & 60.5 & 55.8 & 75.8 & 64.6 & 86.2 & 52.2 & 8.5 & 72.0 & 83.7 & 82.5 & 87.9 & 68.4 & 60.1 & 45.9 & 81.6 & 65.1\\
        ~~$\bullet~$SAR 
        & 54.8 & 62.5 & 55.6 & 75.2 & 48.3 & 87.2 & 34.8 & 15.5 & 71.9 & 83.1 & 82.2 & 87.9 & 68.2 & 60.3 & 42.2 & 81.4 & 63.2\\
        ~~$\bullet~$READ 
        & 50.1 & 58.2 & 52.2 & 74.8 & 63.7 & 87.0 & 55.1 & 2.2 & 71.7 & 83.8 & 81.9 & 87.7 & 67.4 & 62.3 & 42.5 & 81.4 & 63.9\\
        ~~$\bullet~$COME 
        & 55.1 & 62.1 & 50.2 & 76.5 & 40.7 & 87.5 & 26.0 & 1.6 & 70.7 & 84.2 & 83.5 & 88.3 & 71.4 & 61.2 & 33.6 & 81.8 & 60.9\\
        ~~$\bullet~$TSA 
        & 54.7 & 59.0 & 53.3 & 73.7 & 62.7 & 86.7 & 55.3 & 4.4 & 72.1 & 83.2 & 82.2 & 87.3 & 65.6 & 58.9 & 47.9 & 82.0 & 64.3\\
        \rowcolor{pink!30}~~$\bullet~$TCR 
        & \underline{62.0} & \underline{66.6} & \underline{61.4} & \underline{80.0} & \underline{68.1} & \underline{87.9} & \underline{65.2} & \underline{39.9} & \underline{78.2} & \underline{85.2} & \underline{85.7} & \underline{89.5} & \underline{75.1} & \underline{73.1} & \underline{56.8} & \underline{83.3} & \underline{72.4}\\
        \rowcolor{pink!30}~~$\bullet~$REST 
        & \textbf{64.9} & \textbf{70.0} & \textbf{66.3} & \textbf{82.4} & \textbf{72.2} & \textbf{88.3} & \textbf{70.2} & \textbf{47.3} & \textbf{79.7} & \textbf{85.8} & \textbf{86.2} & \textbf{89.9} & \textbf{77.7} & \textbf{78.3} & \textbf{70.2} & \textbf{84.3} & \textbf{75.9}\\
    \cmidrule{1-18}
        BLIP ViT-L/16 
        & 58.2 & 61.0 & 59.7 & 76.9 & 66.4 & 88.5 & 62.5 & 33.4 & 67.7 & 81.5 & 79.3 & 89.1 & 60.4 & 66.4 & 46.5 & 85.0 & 67.7\\
        ~~$\bullet~$Tent 
        & 61.3 & 64.3 & 63.3 & 77.6 & 70.8 & 88.7 & 62.8 & 31.5 & 70.4 & 83.8 & 81.1 & 89.2 & 61.2 & 68.7 & 52.0 & 84.5 & 69.5\\
        ~~$\bullet~$PL 
        & 61.0 & 60.2 & 60.3 & 78.4 & 67.8 & 89.8 & 63.3 & 23.0 & 70.0 & 84.0 & 81.1 & 90.1 & 61.0 & 68.8 & 52.4 & 84.2 & 68.5\\
        ~~$\bullet~$SHOT 
        & 62.1 & 65.5 & 66.7 & 79.3 & 71.8 & 90.6 & 64.9 & 18.6 & 73.4 & 84.9 & 82.5 & 90.5 & 64.1 & 72.0 & 57.2 & 86.4 & 70.7\\
        ~~$\bullet~$EATA 
        & 62.0 & 65.1 & 64.5 & 78.9 & 70.2 & 89.5 & 63.3 & 33.1 & 71.9 & 83.7 & 81.2 & 89.3 & 61.6 & 69.3 & 53.0 & 85.8 & 70.2\\
        ~~$\bullet~$SAR 
        & 61.1 & 64.4 & 63.7 & 79.7 & 71.6 & 90.3 & 64.4 & 27.6 & 70.6 & 83.4 & 81.0 & 89.7 & 62.4 & 70.1 & 53.3 & 85.3 & 69.9\\
        ~~$\bullet~$READ 
        & 61.5 & 61.0 & 62.1 & 78.3 & 69.6 & 89.5 & 62.5 & 37.2 & 72.1 & 83.6 & 81.4 & 89.9 & 61.3 & 67.6 & 52.5 & \underline{86.8} & 69.8\\
        ~~$\bullet~$COME 
        & 65.6 & 67.2 & 68.6 & 79.7 & 72.6 & 90.4 & 66.4 & 18.8 & 71.8 & 84.4 & 81.7 & 91.0 & 65.7 & 72.1 & 56.9 & 86.2 & 71.2\\
        ~~$\bullet~$TSA 
        & 60.7 & 61.6 & 61.0 & 78.4 & 69.0 & 89.1 & 64.9 & 36.6 & 71.7 & 84.0 & 80.6 & 89.4 & 61.2 & 69.5 & 48.7 & 86.4 & 69.6\\
        \rowcolor{pink!30}~~$\bullet~$TCR 
        & \underline{69.0} & \underline{73.3} & \underline{70.8} & \underline{83.8} & \underline{74.7} & \underline{92.1} & \underline{75.2} & \underline{54.4} & \underline{79.6} & \underline{87.7} & \textbf{86.4} & \underline{91.6} & \underline{72.7} & \underline{82.8} & \underline{67.7} & 86.4 & \underline{78.0}\\
        \rowcolor{pink!30}~~$\bullet~$REST 
        & \textbf{71.6} & \textbf{74.5} & \textbf{73.0} & \textbf{84.6} & \textbf{76.5} & \textbf{92.3} & \textbf{75.6} & \textbf{54.5} & \textbf{80.9} & \textbf{88.2} & \underline{86.3} & \textbf{92.5} & \textbf{74.9} & \textbf{84.3} & \textbf{68.7} & \textbf{88.5} & \textbf{79.2}\\
    \cmidrule{1-18}
	\end{tabular}
	}
	 \end{threeparttable}
	 \end{center}
\end{table*}

\vspace{-0.2in}

\begin{table*}[!t]
    \caption{Comparisons with state-of-the-art methods on Flickr-O benchmark under \textbf{\textsc{OQS on the text modality}} regarding the Recall@1 metric.}
    \label{tab: flickr-c-text}
 \begin{center}
 \begin{threeparttable}
    \large
    \resizebox{0.9\linewidth}{!}{
 	\begin{tabular}{l|ccccc|ccccc|ccccc|>{\columncolor{blue!8}}c}
 	\multicolumn{1}{c}{} & \multicolumn{5}{c}{Character-level} & \multicolumn{5}{c}{Word-level} & \multicolumn{5}{c}{Sentence-level}  \\
 	 Query Shift & OCR & CI & CR & CS & CD & SR & RI & RS & RD & IP & Formal & Casual & Passive & Active & Backtrans & Avg.  \\
    \cmidrule{1-17}
        BLIP ViT-B/16 & 53.5 & 18.4 & 18.0 & 30.4 & 22.5 & 68.3 & 77.9 & 76.9 & 77.9 & 82.1 & 82.1 & 81.9 & 79.9 & 82.2 & 79.8 & 62.1 \\
        ~~$\bullet~$Tent & 55.4 & 18.6 & 18.2 & 31.1 & 23.0 & 69.6 & 78.8 & 77.7 & 77.9 & 82.2 & 81.9 & 81.8 & 79.6 & 82.0 & 79.9 & 62.5 \\
        ~~$\bullet~$PL & 56.6 & 19.5 & 13.6 & 31.9 & 23.7 & 69.7 & \underline{79.1} & 77.9 & 77.8 & \textbf{82.5} & \underline{82.3} & \underline{82.0} & 80.5 & 82.4 & 79.9 & 62.7 \\
        ~~$\bullet~$SHOT & 56.8 & 14.4 & 10.7 & 31.2 & 21.4 & 69.7 & 79.0 & 77.9 & 77.9 & 82.3 & 82.2 & \underline{82.0} & 80.6 & \textbf{82.6} & \underline{80.0} & 61.9 \\
        ~~$\bullet~$EATA & 55.7 & 19.9 & 19.9 & 31.6 & 23.6 & 69.5 & 78.6 & 77.5 & 77.9 & \underline{82.4} & \underline{82.3} & 81.8 & 80.5 & \underline{82.5} & \textbf{80.1} & 62.9 \\
        ~~$\bullet~$SAR & 53.5 & 20.1 & 19.1 & 32.1 & 23.8 & 68.3 & 77.9 & 76.9 & 77.7 & 82.1 & 82.1 & 81.9 & 79.9 & 82.2 & 79.8 & 62.5 \\
        ~~$\bullet~$READ & 55.8 & 19.7 & 20.6 & 32.0 & 23.5 & 69.3 & 78.6 & 77.6 & 77.8 & \underline{82.4} & 82.2 & 81.8 & 80.5 & \textbf{82.6} & \textbf{80.1} & 63.0 \\
        ~~$\bullet~$COME & 55.1 & 11.8 & 10.7 & 31.1 & 22.4 & 69.4 & 78.2 & \underline{77.9} & 77.8 & 82.2 & 82.1 & \underline{82.0} & 80.6 & 82.4 & \textbf{80.1} & 61.6 \\
        ~~$\bullet~$TSA & 55.4 & 19.8 & 20.2 & 31.6 & 23.2 & 69.2 & 78.3 & 77.3 & 77.7 & \underline{82.4} & 82.0 & 81.9 & 80.2 & 82.4 & 79.9 & 62.8 \\
        \rowcolor{pink!30}~~$\bullet~$TCR & \underline{57.1} & \underline{21.4} & \underline{22.5} & \underline{33.6} & \underline{25.1} & \underline{69.8} & \textbf{79.2} & \textbf{78.0} & \underline{78.0} & \textbf{82.5} & \textbf{82.4} & \textbf{82.1} & \textbf{80.8} & \underline{82.5} & \textbf{80.1} & \underline{63.7} \\
        \rowcolor{pink!30}~~$\bullet~$REST & \textbf{57.7} & \textbf{21.8} & \textbf{22.9} & \textbf{34.1} & \textbf{25.2} & \textbf{69.9} & \textbf{79.2} & \textbf{78.0} & \textbf{78.1} & \textbf{82.5} & \underline{82.3} & \textbf{82.1} & \underline{80.7} & \textbf{82.6} & \textbf{80.1} & \textbf{63.8} \\
    \cmidrule{1-17}
        BLIP ViT-L/16 & 58.0 & 22.2 & 22.0 & 34.1 & 25.1 & 71.2 & 79.9 & 78.9 & 78.8 & 83.3 & 83.1 & 82.7 & 81.7 & \textbf{83.5} & 80.7 & 64.4 \\
        ~~$\bullet~$Tent & 59.0 & 22.4 & 22.1 & 34.5 & 25.3 & 71.4 & 80.3 & \underline{79.3} & 78.8 & \underline{83.4} & 82.8 & 82.7 & 81.8 & 83.3 & 80.7 & 64.5 \\
        ~~$\bullet~$PL & 58.6 & 23.0 & 22.1 & 34.1 & 24.6 & 71.5 & 80.3 & 79.2 & 78.7 & 83.3 & 83.1 & 82.9 & 81.6 & 83.4 & 80.6 & 64.5 \\
        ~~$\bullet~$SHOT & 58.9 & 21.5 & 19.2 & 34.6 & 24.2 & \textbf{71.7} & 80.3 & 79.2 & 78.8 & 83.3 & 83.2 & 83.2 & 81.8 & \textbf{83.5} & 80.6 & 64.3 \\
        ~~$\bullet~$EATA & 59.1 & 23.0 & 23.2 & 35.1 & 25.6 & \textbf{71.7} & 80.3 & \underline{79.3} & 78.8 & \textbf{83.5} & 83.0 & 83.2 & 81.8 & \textbf{83.5} & 80.7 & \underline{64.8} \\
        ~~$\bullet~$SAR & 58.1 & 23.1 & 23.0 & 34.5 & 25.8 & 71.2 & 79.9 & 78.9 & 78.8 & 83.3 & 83.1 & 82.7 & 81.7 & 83.4 & 80.7 & 64.6 \\
        ~~$\bullet~$READ & 58.9 & \underline{23.4} & 23.3 & 34.9 & 25.9 & 71.5 & 80.4 & 79.2 & 78.8 & \textbf{83.5} & \textbf{83.2} & 83.1 & 81.8 & 83.4 & \textbf{80.8} & \underline{64.8} \\
        ~~$\bullet~$COME & 58.6 & 20.8 & 19.4 & 33.8 & 24.7 & \underline{71.6} & 80.2 & 79.2 & 78.7 & 83.2 & 83.0 & 83.0 & 81.6 & 83.3 & 80.5 & 64.1 \\
        ~~$\bullet~$TSA & 58.8 & 22.8 & 22.6 & 34.1 & 24.8 & 71.4 & \underline{80.5} & \underline{79.3} & 78.8 & \textbf{83.5} & 83.1 & 82.8 & 81.8 & \textbf{83.5} & 80.7 & 64.6 \\
        \rowcolor{pink!30}~~$\bullet~$TCR & \underline{59.5} & \textbf{24.4} & \underline{24.7} & \underline{36.2} & \underline{26.9} & \textbf{71.7} & \textbf{80.6} & \textbf{79.4} & \textbf{78.9} & 83.3 & \textbf{83.2} & \textbf{83.4} & \underline{81.9} & \textbf{83.5} & \textbf{80.8} & \underline{65.2} \\
        \rowcolor{pink!30}~~$\bullet~$REST & \textbf{59.6} & \textbf{24.4} & \textbf{25.0} & \textbf{36.4} & \textbf{27.2} & \textbf{71.7} & \underline{80.5} & \textbf{79.4} & \textbf{78.9} & \underline{83.4} & \textbf{83.2} & \underline{83.3} & \textbf{82.0} & \textbf{83.5} & \textbf{80.8} & \textbf{65.3} \\
    \cmidrule{1-17}
	\end{tabular}
	}
	 \end{threeparttable}
	 \end{center}
\end{table*}

\subsection{Experiments with More Pre-trained Model Size}
In the manuscript, we have carried out experiments under ``image-to-text'' retrieval setting on the COCO-O benchmark using the BLIP ViT-B/16 backbone.
Here, we further provide results under ``text-to-image '' retrieval setting and with different pre-trained model sizes to verify the effectiveness of the proposed REST.
The results in Tables~\ref{tab: coco-o-image-large}–\ref{tab: coco-o-text-large} demonstrate that REST consistently achieves substantial improvements over all baselines across various pre-trained model sizes and different CMR settings.

\subsection{Results on the Remote Sensing Benchmarks}
To further verify the effectiveness of the proposed REST, we conduct additional experiments in the remote sensing domain. Specifically, we adopt the pre-trained BLIP as the source model and perform zero-shot retrieval on two remote sensing datasets: RSICD~\cite{RSICD} and RSITMD~\cite{RSITMD}. 
As shown in Table~\ref{tab: remote sensing}, REST consistently achieves the best performance in the challenging remote sensing domain.

\subsection{Results on the Flickr-O Benchmark}
In the manuscript and Tables~\ref{tab: coco-o-image-large}-\ref{tab: coco-o-text-large}, we have reported experimental results on the COCO-O benchmark.
Here, we further present experimental results on the Flickr-O benchmark.
Specifically, we first fine-tune the pre-trained BLIP on the Flickr benchmark, and then perform TTA on the Flickr-O benchmark.
As shown in Tables~\ref{tab: flickr-c-image}–\ref{tab: flickr-c-text}, REST significantly outperforms all baselines across different pre-trained model sizes on the Flickr-O benchmark.

\subsection{Results on the CIRR-O Benchmark}
In the manuscript and Tables~\textsc{III}, we have reported experimental results on the FIQ-O benchmark.
In this section, we conduct more experiments on the CIRR-O Benchmark.
Specifically, we first fine-tune the pre-trained BLIP-2 on the CIRR benchmark, and then perform TTA on the CIRR-O benchmark.
From the results in Table~\ref{tab: cirr-o}, one can observe that most TTA methods fail to deliver performance improvements, whereas the proposed REST achieves stable performance gains.

\begin{table*}[htbp]
\centering
\caption{Comparisons with state-of-the-art methods on CIRR-O benchmark under \textbf{\textsc{OQS on the image and text modalities}} regarding the Recall@1 metric. $^{*}$ denotes the image corruption and $^{\dagger}$ denotes the text corruption.}
\label{tab: cirr-o}
\resizebox{0.9\linewidth}{!}{
\begin{tabular}{l|ccc|ccc|ccc|ccc|>{\columncolor{blue!8}}c}
\multicolumn{1}{c}{} & \multicolumn{3}{c}{Noise$^{*}$} & \multicolumn{3}{c}{Blur$^{*}$} & \multicolumn{3}{c}{Weather$^{*}$} & \multicolumn{3}{c}{Digital$^{*}$}   \\
 Methods & Char.$^{\dagger}$ & Word$^{\dagger}$ & Sent.$^{\dagger}$ & Char.$^{\dagger}$ & Word$^{\dagger}$ & Sent.$^{\dagger}$ & Char.$^{\dagger}$ & Word$^{\dagger}$ & Sent.$^{\dagger}$ & Char.$^{\dagger}$ & Word$^{\dagger}$ & Sent.$^{\dagger}$ & Avg. \\
\hline
Base & 30.2 & 32.4 & 44.7 &29.3 & 32.9 & 43.4 & 30.3 & 33.1 & 45.2 & 32.6 & 35.1 & 47.2 & 36.4 \\
~~$\bullet~$Tent & 30.4 & \underline{32.9} & \underline{44.9} & 29.2 & 33.0 & \underline{43.4} & \underline{30.4} & \underline{33.4} & 45.2 & 32.2 & 35.2 & 47.0 & 36.4 \\
~~$\bullet~$PL & 30.4 & 32.7 & \underline{44.9} & 29.2 & 32.7 & \underline{43.4} & \underline{30.4} & 33.2 & 45.1 & 32.5 & 35.2 & 47.1 & 36.4 \\
~~$\bullet~$SHOT & 30.4 & \underline{32.9} & 44.8 & \underline{29.3} & 32.9 & \textbf{43.5} & \underline{30.4} & 33.2 & 45.1 & 32.3 & 35.3 & 47.0 & 36.4 \\
~~$\bullet~$EATA & 30.6 & \underline{32.9} & 44.7 & \textbf{29.4} & 33.1 & 43.2 & \underline{30.4} & 33.1 & 45.3 & 32.5 & 35.3 & \underline{47.3} & 36.5 \\
~~$\bullet~$SAR & 30.3 & 32.4 & 44.7 & \underline{29.3} & 32.9 & \underline{43.4} & 30.3 & 33.1 & 45.3 & 32.5 & 35.1 & 47.2 & 36.4 \\
~~$\bullet~$READ & 30.2 & 32.5 & \underline{44.9} & 29.1 & 32.7 & 43.3 & 30.2 & 33.0 & 45.0 & 32.5 & 35.0 & 46.9 & 36.3 \\
~~$\bullet~$COME & 30.0 & 32.7 & 44.3 & 27.6 & 32.3 & 42.3 & 29.1 & 33.0 & 44.1 & 32.1 & \underline{35.4} & \underline{47.3} & 35.9 \\
~~$\bullet~$TSA & 30.4 & 32.5 & 44.8 & 28.8 & 32.7 & 43.3 & 30.3 & 33.1 & 45.2 & \underline{32.6} & 35.0 & 46.9 & 36.3 \\
\rowcolor{pink!30}~~$\bullet~$TCR & \underline{31.2} & 32.8 & \textbf{45.0} & 29.1 & \underline{33.3} & \textbf{43.5} & \textbf{31.0} & \underline{33.4} & \underline{45.4} & \textbf{32.7} & \underline{35.4} & \underline{47.3} & \underline{36.7} \\
\rowcolor{pink!30}
~~$\bullet~$Ours & \textbf{31.4} & \textbf{33.0} & \textbf{45.0} & \underline{29.3} & \textbf{33.8} & \underline{43.4} & \textbf{31.0} & \textbf{33.5} & \textbf{45.7} & \textbf{32.7} & \textbf{35.5} & \textbf{47.4} & \textbf{36.8} \\
\hline
\end{tabular}
}
\end{table*}

\begin{table*}[ht]
    \caption{Comparisons with state-of-the-art methods on COCO-O benchmark under \textbf{\textsc{OQS on the image modality}} regarding the Recall@1 metric. The best results are marked in \textbf{bold} and the second best results are \underline{underlined}.}
    \label{tab: coco-o-image-single}
 \begin{center}
 \begin{threeparttable}
    \Large
    \resizebox{0.98\linewidth}{!}{
 	\begin{tabular}{l|cccc|cccc|cccc|cccc|>{\columncolor{blue!8}}c}
 	\multicolumn{1}{c}{} & \multicolumn{4}{c}{Noise} & \multicolumn{4}{c}{Blur} & \multicolumn{4}{c}{Weather} & \multicolumn{4}{c}{Digital}  \\
 	 Methods & Gauss. & Shot & Impul. & Speckle & Defoc. & Glass & Motion & Zoom & Snow & Frost & Fog & Brit. & Contr. & Elastic & Pixel & JPEG & Avg.  \\
    \cmidrule{1-18}
        BLIP ViT-B/16 & 51.0 & 52.5 & 50.4 & 64.7 & 51.6 & 75.7 & 48.2 & 11.5 & 43.3 & \underline{61.1} & 65.0 & 74.1 & 42.7 & 48.8 & 24.9 & 71.6 & 52.3 \\ 
        ~~$\bullet~$Tent & 25.5 & 29.9 & 39.8 & 61.1 & 28.5 & 75.6 & 26.4 & 2.1 & 30.4 & 50.6 & 59.4 & 74.3 & 28.8 & 30.3 & 9.3 & 68.9 & 40.1 \\ 
        ~~$\bullet~$PL & 28.7 & 27.0 & 30.4 & 56.0 & 45.8 & 75.4 & 37.3 & 5.3 & 28.4 & 56.5 & 60.6 & 73.7 & 14.6 & 29.3 & 9.6 & 71.2 & 40.6 \\
        ~~$\bullet~$SHOT & 24.9 & 28.9 & 38.4 & 57.6 & 29.0 & 73.8 & 19.9 & 2.2 & 32.9 & 47.5 & 60.6 & 74.2 & 24.8 & 26.2 & 7.7 & 69.3 & 38.6 \\
        ~~$\bullet~$EATA & 45.2 & 45.2 & 44.4 & 60.4 & 40.8 & 75.0 & 45.0 & 3.0 & 43.8 & 60.4 & 64.9 & \underline{74.4} & 32.2 & 43.3 & 18.0 & 70.9 & 47.9  \\
        ~~$\bullet~$SAR & 51.0 & 52.5 & 50.4 & 64.7 & 51.6 & 75.7 & 48.1 & 11.6 & 43.3 & \underline{61.1} & 65.0 & 74.1 & 42.7 & 48.8 & 24.9 & 71.6 & 52.3  \\
        ~~$\bullet~$READ & 43.4 & 48.3 & 49.0 & 59.6 & 49.8 & 74.3 & 41.3 & 9.5 & 41.9 & 59.5 & 62.7 & 73.5 & 20.6 & 38.3 & 12.5 & 69.1 & 47.1  \\
        ~~$\bullet~$COME & 27.6 & 25.4 & 29.5 & 55.3 & 43.6 & 74.3 & 35.7 & 2.3 & 28.1 & 55.4 & 59.6 & 72.4 & 12.5 & 27.8 & 8.6 & 70.1 & 39.3 \\
        ~~$\bullet~$TSA & 46.9 & 44.9 & 43.6 & 62.0 & 42.6 & 75.2 & 42.1 & 10.5 & 42.1 & 59.8 & 62.9 & 73.9 & 42.9 & 50.0 & 22.5 & 70.0 & 49.5 \\
        \rowcolor{pink!30}~~$\bullet~$TCR & \underline{56.7} & \underline{58.9} & \underline{55.7} & \textbf{68.9} & \underline{61.2} & \underline{76.2} & \underline{62.2} & \underline{38.8} & \underline{62.8} & \textbf{67.1} & \textbf{75.2} & \textbf{76.6} & \underline{62.3} & \underline{69.6} & \underline{46.9} & \underline{72.0} & \underline{63.2} \\
        \rowcolor{pink!30}~~$\bullet~$REST & \textbf{59.3} & \textbf{60.0} & \textbf{59.9} & \underline{68.0} & \textbf{62.2} & \textbf{76.5} & \textbf{63.8} & \textbf{39.9} & \textbf{63.9} & \textbf{67.1} & \underline{75.0} & \underline{76.0} & \textbf{64.4} & \textbf{70.2} & \textbf{49.1} & \textbf{72.5} & \textbf{64.2} \\
    \cmidrule{1-18}
	\end{tabular}
	}
	 \end{threeparttable}
	 \end{center}
\end{table*}

\begin{table*}[t]
    \vspace{-0.1in}
    \caption{Comparisons with state-of-the-art methods on COCO-O benchmark under \textbf{\textsc{OQS on the text modality}} regarding the Recall@1 metric. The best results are marked in \textbf{bold} and the second best results are \underline{underlined}.}
    \label{tab: coco-o-text-single}
\newcommand{\tabincell}[2]{\begin{tabular}{@{}#1@{}}#2\end{tabular}}
 \begin{center}
 \begin{threeparttable}
    \Large
    \resizebox{0.9\linewidth}{!}{
 	\begin{tabular}{l|ccccc|ccccc|ccccc|>{\columncolor{blue!8}}c}
 	\multicolumn{1}{c}{} & \multicolumn{5}{c}{Character-level} & \multicolumn{5}{c}{Word-level} & \multicolumn{5}{c}{Sentence-level}  \\
 	Methods & OCR & CI & CR & CS & CD & SR & RI & RS & RD & IP & Formal & Casual & Passive & Active & Backtrans & Avg.  \\
    \cmidrule{1-17}
        BLIP ViT-B/16 & \underline{36.6} & 13.7 & 11.6 & \underline{21.5} & 14.3 & 47.4 & 56.0 & \underline{55.0} & 55.9 & 62.6 & 62.0 & 61.9 & 59.9 & 62.6 & 59.1 & \underline{45.3} \\
        ~~$\bullet~$Tent & 24.2 & 3.3 & 2.7 & 8.8 & 4.6 & 46.7 & 54.8 & 52.8 & 55.7 & 62.6 & \underline{62.4} & 62.1 & 59.7 & 62.8 & \underline{59.5} & 41.5 \\
        ~~$\bullet~$PL & 13.5 & 3.3 & 2.9 & 7.8 & 4.7 & 26.9 & 27.6 & 31.6 & 39.5 & 46.2 & 61.8 & 61.6 & 59.0 & 62.5 & 59.0 & 33.9 \\
        ~~$\bullet~$SHOT & 17.7 & 3.2 & 2.8 & 10.9 & 4.3 & 47.1 & 54.6 & 52.5 & 55.6 & 62.1 & \underline{62.4} & 62.1 & 59.8 & 62.8 & \underline{59.5} & 41.2 \\
        ~~$\bullet~$EATA & 34.3 & 6.4 & 5.0 & 17.5 & 7.5 & 47.6 & 55.7 & 54.4 & \underline{56.6} & 62.7 & \underline{62.4} & 62.2 & \underline{60.1} & 62.9 & 59.4 & 43.7 \\
        ~~$\bullet~$SAR & 33.5 & 6.1 & 4.6 & 17.0 & 7.3 & 47.2 & 55.3 & 54.1 & 56.3 & 62.3 & 62.3 & 61.8 & 59.6 & 62.1 & 58.7 & 43.2 \\
        ~~$\bullet~$READ & 20.4 & 4.6 & 3.7 & 12.1 & 7.1 & 37.2 & 39.5 & 36.8 & 47.6 & 60.4 & 61.5 & 61.3 & 59.2 & 62.0 & 58.2 & 38.1 \\
        ~~$\bullet~$COME & 17.9 & 3.4 & 3.0 & 11.2 & 4.5 & 45.8 & 54.3 & 51.8 & 55.4 & 61.5 & 62.3 & \underline{62.3} & 59.5 & \underline{63.0} & 59.3 & 41.0 \\
        ~~$\bullet~$TSA & 34.3 & 12.2 & 10.5 & 19.8 & 12.8 & \underline{47.9} & \underline{56.4} & 54.7 & 56.3 & \underline{62.8} & 62.2 & 62.1 & 60.0 & 62.8 & 59.3 & 44.9 \\
        \rowcolor{pink!30}~~$\bullet~$TCR & 36.2 & \underline{14.7} & \underline{13.1} & 20.8 & \underline{14.4} & 44.4 & 49.8 & 52.0 & 55.7 & 61.8 & 62.3 & 62.2 & 59.5 & 62.8 & 59.3 & 44.6 \\
        \rowcolor{pink!30}~~$\bullet~$REST & \textbf{38.5} & \textbf{15.7} & \textbf{13.9} & \textbf{22.3} & \textbf{15.2} & \textbf{49.9} & \textbf{58.7} & \textbf{56.1} & \textbf{57.0} & \textbf{63.1} & \textbf{63.1} & \textbf{62.8} & \textbf{61.2} & \textbf{63.3} & \textbf{60.5} & \textbf{46.7} \\
    \cmidrule{1-17}
	\end{tabular}
	}
	 \end{threeparttable}
	 \end{center}
\end{table*}

\subsection{More Experiments on the Single-Stream Model}
For the dual-stream retrieval paradigm explored in the manuscript, multi-modal encoders first map both queries and candidates into the common space, and the retrieval results are then obtained by computing the cosine similarity between their features in the common space.
Beyond the dual-stream retrieval paradigm, recent models have further improved performance by employing single-stream networks on top of the initial retrieval results obtained by the dual-stream retrieval paradigm, with BLIP~\cite{BLIP} serving as the most representative method.
To be specific, given the initial retrieval results, the retrieval model first selects the query-candidate pairs with the Top-$k$ highest similarities, and then the single-stream networks would output a binary prediction for each selected pair, where one dimension corresponds to the matching score and the other to the mismatching score.
After that, the matching score is incorporated as an additional reward into the initial retrieval results, yielding the final retrieval results.

To further validate the effectiveness of REST, we conduct additional experiments on the CMR model BLIP, which might be one of the first attempts to explore TTA in the single-stream CMR paradigm.
Since most TTA methods are not specifically designed for the binary predictions produced by single-stream networks, we employ them for updating the parameters of dual-stream networks in the experiments.
It is worth noting that the proposed robust hard-mining loss (Eq.~14) could be directly adopted for the binary predictions and thus facilitates the updates of single-stream networks, which further demonstrates the generality of REST across both single-stream and dual-stream retrieval paradigms.
From the results in Table~\ref{tab: coco-o-image-single}–\ref{tab: coco-o-text-single}, most existing TTA methods fail to deliver performance gains, since the initial retrieval results are not necessarily correct and thus lead to sub-optimal overall retrieval performance.
In contrast, the proposed REST boosts the overall retrieval performance by refining both the initial scores from the dual-stream network and the matching scores from the single-stream network.

\subsection{Efficiency Comparisons}
In this section, we present additional experiments to analyze the efficiency of REST. 
To this end, we perform zero-shot retrieval under the ``GeneralCOCO'' and ``Fashion-Gen'' settings using the pre-trained BLIP and then measure GPU time during the adaptation process.
The results under the ``GeneralCOCO'' setting demonstrate that REST achieves adaptation more efficiently than TSA, which relies on additional parameterized modules. 
Moreover, the results under the ``Fashion-Gen'' setting indicate that REST incurs a negligible extra time cost compared with vanilla Tent and optimization-accelerated EATA (which perform TTA on low-entropy samples), primarily due to the gradient decoupling module.

\begin{table}[t]
\centering
\caption{Efficiency comparisons among different approaches. The unit is seconds.}
\label{tab:time_efficiency}
\Large
\resizebox{0.7\linewidth}{!}{
\begin{tabular}{lccccc}
\toprule
\multirow{2}{*}{Methods} & \multicolumn{2}{c}{GeneralCOCO} & \multicolumn{2}{c}{Fashion-Gen} &  \\
 & TR & IR & TR & IR & Avg. \\
\midrule
Tent & 102 & 175 & 372 & 215 & 216 \\
EATA & 94  & 165 & 355 & 199 & 203 \\
READ & 101 & 178 & 388 & 235 & 226 \\
TSA  & 125 & 193 & 433 & 260 & 253 \\
Ours & 101 & 173 & 384 & 229 & 222 \\
\bottomrule
\end{tabular}
}
\vspace{-0.1in}
\end{table}

\begin{figure}[t]
\centering
\includegraphics[width=0.9\linewidth]{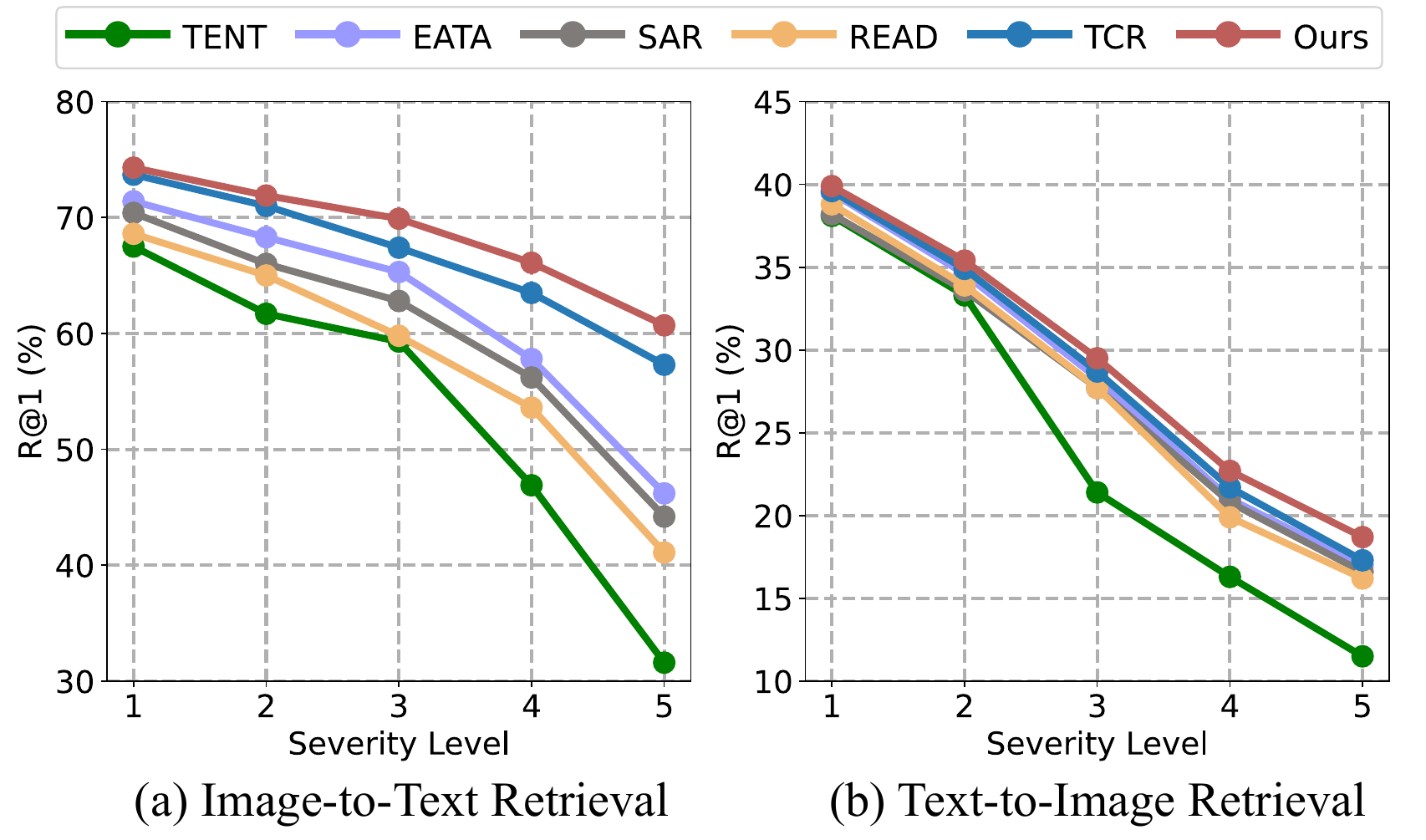}
\caption{
Comparisons on the ``COCO-O'' benchmark with various severities of corruptions.
}
\label{fig: severity}
\end{figure}

\subsection{Comparisons under Various Corruption Severities}
To further validate the effectiveness of the proposed REST, we conduct experiments on the ``COCO-O'' benchmarks under various corruption severities.
Specifically, we introduce 16 types of image corruptions and 5 types of character-level text corruptions in the experiments, each corruption with five levels of severity.
The results in Fig.~\ref{fig: severity} demonstrate that our method not only achieves superior robustness across different severities, but also exhibits significantly slower performance degradation as the corruptions become more severe.
Note that the experiments in the manuscript and Tables~\ref{tab: coco-o-image-large}-\ref{tab: coco-o-text-large} are conducted under corruptions at the maximum severity level.

\end{document}